\documentclass[11pt]{article}

\usepackage[margin=1in]{geometry}
\usepackage{amsmath, amssymb, amsthm}
\usepackage{graphicx}
\usepackage{caption}
\usepackage{subcaption}
\usepackage{booktabs}
\usepackage{float}
\usepackage{newtxtext,newtxmath}
\usepackage{fancyhdr}
\usepackage{tikz}
\usepackage{titlesec}
\usepackage{multicol}
\usepackage{dblfloatfix}
\usepackage{placeins}
\usepackage[authoryear,round]{natbib}
\usepackage[final]{microtype}
\usepackage{mathtools}   
\usepackage{bm}          

\AtBeginDocument{\fontsize{11}{13}\selectfont}

\usepackage[hyphens]{url}  
\usepackage{xurl}          
\makeatletter
\g@addto@macro{\UrlBreaks}{\do\-}   
\makeatother

\usepackage{etoolbox}
\usepackage{hyperref}
\hypersetup{breaklinks=true}

\titleformat{\section}{\large\bfseries}{\thesection.}{0.5em}{}
\titleformat{\subsection}{\normalsize\bfseries}{\thesubsection.}{0.5em}{}

\theoremstyle{plain}  

\newtheorem{theorem}{Theorem}[section]  
\newtheorem{proposition}{Proposition}[section]

\newtheorem{corollary}{Corollary}[section]

\usepackage{subcaption}
\newcommand{\topicstart}[1]{\smallskip\noindent\textbf{#1}}
\usepackage{orcidlink}
\usepackage{fontawesome5}  

\raggedbottom

\newenvironment{acknowledgment}[1][Acknowledgment]{%
  \section*{#1}
  \addcontentsline{toc}{section}{#1}
}{}

\newenvironment{keywords}[1][Keywords]{%
  \medskip\small\noindent\textbf{#1: }\ignorespaces
}{%
  \par\normalsize
}


\title{Closed-Form Beta Distribution Estimation from Sparse Statistics with Random Forest Implicit Regularization}

\author{\Large Jonathan R. Landers\\[1em]
\faEnvelope\ \href{mailto:jonathan.robert.landers@gmail.com}{\texttt{jonathan.robert.landers@gmail.com}}\\[2pt]
\orcidlink{0000-0003-1872-6179}\ \href{https://orcid.org/0000-0003-1872-6179}{\texttt{orcid.org/0000-0003-1872-6179}}\\[1pt]
\faGithub\ \href{https://github.com/jonland82/seatgeek-beta-modeling}{\texttt{github.com/jonland82/seatgeek-beta-modeling}}
}

\date{}

\begin{document}
\maketitle

\begin{abstract}
This work advances distribution recovery from sparse data and ensemble classification through three main contributions. First, we introduce a closed-form estimator that reconstructs scaled beta distributions from limited statistics (minimum, maximum, mean, and median) via composite quantile and moment matching. The recovered parameters $(\alpha,\beta)$, when used as features in Random Forest classifiers, improve pairwise classification on time-series snapshots, validating the fidelity of the recovered distributions. Second, we establish a link between classification accuracy and distributional closeness by deriving error bounds that constrain total variation distance and Jensen-Shannon divergence, the latter exhibiting quadratic convergence. Third, we show that zero-variance features act as an implicit regularizer, increasing selection probability for mid-ranked predictors and producing deeper, more varied trees. A SeatGeek pricing dataset serves as the primary application, illustrating distributional recovery and event-level classification while situating these methods within the structure and dynamics of the secondary ticket marketplace. The UCI handwritten digits dataset confirms the broader regularization effect. Overall, the study outlines a practical route from sparse distributional snapshots to closed-form estimation and improved ensemble accuracy, with reliability enhanced through implicit regularization.
\end{abstract}

\begin{center}
\begin{keywords}
Scaled beta distribution, Parameter estimation, Random forest, Implicit regularization, Jensen-Shannon divergence, Time series classification, Event ticket pricing
\end{keywords}
\end{center}

\section{Introduction}
\label{sec:introduction}

Recovering probability distributions from limited information is a central problem in data analysis. In many applied settings, only a small set of summaries (minimum, maximum, mean, and median) is available rather than full samples. For time-series classification, target examples can be compared through compact snapshots of representative, underlying distributions. In this context, ensemble methods such as Random Forests benefit from features derived from reconstructed distributions, provided estimation is tractable and theoretically grounded. In practice, we interpret out-of-sample Random Forest accuracy as a pragmatic gauge of distributional integrity.

\topicstart{This paper presents the following contributions:}

\begin{enumerate}
  \item \textbf{Closed-form distribution estimation from limited statistics.} A method is introduced to reconstruct scaled beta laws from incomplete statistics using composite quantile and moment matching, producing parameters $(\alpha,\beta)$ that convey shape information beyond location summaries (Sections~\ref{sec:scaled_beta_distribution} and~\ref{sec:quantile_moment_matching}). Injecting $(\alpha,\beta)$ into Random Forests improves pairwise classification of time-series snapshots, indicating that the recovered distributions preserve class-distinctive structure (Sect.~\ref{sec:random_forest_results}). Recovery is not only theoretically sound but also efficient, with a closed-form estimator that contrasts with iterative alternatives.
  \item \textbf{Accuracy–fidelity theory.} A link between predictive accuracy and distributional fidelity is established: Theorem~\ref{thm:total_variation} (\emph{Classification Accuracy and Total Variation Distance}) bounds total variation distance by classification error, and Theorem~\ref{thm:jensen_shannon} (\emph{Classification Accuracy and Jensen-Shannon Divergence}) strengthens the connection to a quadratic convergence rate in Jensen-Shannon divergence, showing that modest accuracy gains imply disproportionately larger reductions in information-theoretic divergence. Accordingly, classifier performance can be used as an operational proxy for how closely the recovered distributions match the (unobserved) ground truth.
  \item \textbf{Implicit regularization via zero-variance features.} When zero-variance (constant-value) features are added to the Random Forest ensemble, Theorem~\ref{thm:zero_variance_dilution} (\emph{Zero-Variance Dilution Effect}) formalizes how split-selection probabilities are rebalanced away from over-dominant predictors. Corollary~\ref{corollary:increased_tree_depth} (\emph{Increased Expected Tree Depth}) and Corollary~\ref{corollary:reduced_ensemble_correlation} (\emph{Reduced Ensemble Correlation}) show that this dilution yields deeper, more varied trees and lowers inter-tree correlation. Theorem~\ref{thm:continuous_approx} (\emph{Continuous Approximation via Zero-Variance Dilution}) together with Corollary~\ref{corollary:continuous_accuracy} (\emph{Continuous Accuracy Expansion via Selection Probability}) then show that the mechanism enables fine-grained control over selection probabilities and tree variety. Although accuracy gains are intentionally modest (Sect.~\ref{sec:regularization_experimental_results}), the quadratic bound ensures that they correspond to meaningful improvements in distributional correspondence, reinforcing trust in recovery fidelity.
\end{enumerate}

At a high level, the narrative arc is: sparse distributional snapshot from time series $\rightarrow$ closed-form scaled beta estimation $(\alpha,\beta)$ $\rightarrow$ fidelity evidenced by Random Forest classification gains and amplified by implicit regularization from constant-value features.

The empirical study centers on two datasets. The primary application is a curated SeatGeek time-series dataset collected via the SeatGeek API from May 2023 to May 2024 \citep{seatgeek2025}. It contains approximately 130,000 events, 15,400 artists or acts, and 6,700 venues across the United States, spanning globally recognized performers (e.g., Metallica, Taylor Swift) as well as local acts. The dataset reflects a large and growing U.S. online ticket market with revenue for online ticket sales estimated at \$12.7 billion in 2024 \citep{CRS2025}. SeatGeek’s position in this market has been reinforced through strategic partnerships. Paciolan, a leading ticketing provider, selected the company as the official secondary marketplace for its college athletics clients in February 2023, with integration beginning in July 2023 \citep{SBJ2023}. The ability to explicitly reverse-engineer event pricing distributions not only provides practical value for market applications, but also holds methodological interest for a wide range of domains where inference from limited statistics is required. Additional experiments on the UCI ML handwritten digits dataset \citep{uci_digits} show that the implicit regularization mechanism is not specific to ticketing data and carries over to a standard benchmark.

The remainder of the paper proceeds as follows. 
Section~\ref{sec:related_work} reviews prior work on distribution estimation from limited statistics, quantile- and moment-based inference, time-series classification, and implicit regularization in ensembles. 
Section~\ref{sec:preliminaries} describes the SeatGeek dataset, the raw time-series representation, and the transformation to distributional features. 
Section~\ref{sec:estimation} details the scaled beta estimation based on composite quantile and moment matching and states the formal accuracy–fidelity bounds. 
Section~\ref{sec:regularization} develops the implicit regularization formalism and provides supporting evidence. 
Section~\ref{sec:conclusions} concludes with implications and outlook.

\section{Related work}
\label{sec:related_work}

The literature relevant to this research spans areas that directly correspond to our contributions: (1) distribution-based parameter estimation using limited statistics, (2) statistical learning theory connecting estimation accuracy with classification performance, and (3) implicit regularization in ensemble methods. We briefly review each in turn.

\subsection{Distribution-based parameter estimation and classification}
Estimating parameters without full distributions and classifying time series are long-standing problems across machine learning, statistics, and econometrics. We estimate scaled beta parameters via composite quantile and moment matching from limited summaries, connecting to quantile-based estimation, moment matching, statistical learning theory, and feature-based TSC. Classic TSC baselines such as DTW/1-NN~\citep{berndt1994using,rakthanmanon2012searching} and a recent survey~\citep{middlehurst2024bakeoff} frame our comparisons, including ensemble methods like TSF and CIF. Shapelets~\citep{ye2009time} introduce discriminative subsequences; in our setting, subsequences aid distribution recovery that ultimately supports act classification. For beta distributions, \citep{krishnamoorthy2016handbook} covers traditional fitting under full samples; we extend to estimators from limited summaries, enabling event-specific modeling with minimal data.

Feature toolkits such as TSFresh~\citep{christ2018tsfresh} and Catch22~\citep{lubba2019catch22} extract broad or minimal sets of interpretable statistics; our approach instead learns a bounded-support distribution with few parameters. Quantile-centric and bounded-support works further motivate this stance: Quantile Flows for GFlowNets~\citep{zhang2023quantile} show how quantiles can replace point estimates; Beta Diffusion~\citep{zhou2023beta} highlights beta’s flexibility for range-bounded inference; QUANT~\citep{dempster2024quant} attains SOTA TSC using only quantiles from dyadic intervals; LQM~\citep{wei2024latent} demonstrates that limited quantiles preserve properties needed downstream; and a black-box simulation framework~\citep{lenzi2025blackbox} targets parameter recovery under limited information.

Beyond quantiles, moment-matching methods inform our estimators. Moment matching accelerates diffusion sampling by aligning conditional expectations~\citep{salimans2024moment} and improves denoising Gibbs sampling in energy-based models~\citep{zhang2023gibbs}. Reliability estimation for the exponentiated Pareto distribution from only lower record values~\cite{pareto2025reliability} similarly infers parameters from severely limited statistics. In this spirit, we combine quantile- and moment-based constraints to recover scaled beta parameters for ticket pricing when observations are sparse.

\subsection{Learning theory and estimation accuracy}
\citep{lin1991divergence} introduced the Jensen-Shannon divergence as a symmetrical, bounded  
measure of distributional distance, demonstrating how classification accuracy can deteriorate  
significantly when estimated and true distributions diverge. \citep{devroye1996probabilistic} then provided probabilistic bounds on classification risk,  
directly linking distribution-estimation error to predictive accuracy. 
\citep{tsybakov2004optimal} introduced margin conditions under which classification error  
rates converge optimally, establishing a deeper connection between parameter-estimation precision and  
classification performance.

Our findings follow these foundational insights: improved estimation of beta parameters leads to
more accurate classification of event types, while misestimation propagates into downstream
classification error. By mapping the proposed parameter-estimation method to these theoretical
frameworks, we demonstrate how precisely characterizing the underlying distribution supports
robust predictive performance. Moreover, the observed classification accuracy itself provides
indirect validation that the estimated distributions faithfully capture key aspects of the true
underlying pricing dynamics. In this way, our scaled beta approach echoes the broader principle
in statistical learning that well-characterized data distributions are essential for achieving
strong generalization, and conversely, strong generalization serves as empirical evidence of
distributional fidelity.

\subsection{Implicit regularization and entropy in random forests}
Work on ensembles, especially Random Forests, has long emphasized implicit regularization for robust generalization. Early foundations include Jensen-Shannon divergence as a tool for measuring distributional shifts~\citep{lin1991divergence}, bagging for variance reduction~\citep{breiman1996bagging}, the Random Subspace Method (RSM) to limit over-reliance on any feature subset~\citep{ho1998rsm}, and Extremely Randomized Trees, which further inject randomness into splits~\citep{geurts2006extremely}.

\citep{breiman2001random} formalized generalization in terms of tree “strength” and inter-tree “correlation,” showing that lower correlation improves accuracy. Stability and entropy-based perspectives~\citep{bousquet2002stability} complement this view. Despite being capable of interpolation, Random Forests can generalize via ensemble self-averaging~\citep{wyner2017explain}. In causal forests, adaptive neighborhood selection provides implicit regularization that reduces estimation variance~\citep{wager2018causal}. Random feature selection ($\mathtt{max\_features}$ or $m$) likewise lowers variance and acts as an implicit regularizer~\citep{mentch2019quantifying}, a theme connected to budget-aware hyperparameter tuning~\citep{cironis2022automatic}. Relatedly, sparse Bayesian learning with automatic-weighting Laplace priors shows how structural constraints induce implicit regularization~\citep{bai2023sbl}.

Our contribution highlights an additional, often overlooked mechanism: introducing zero-variance features reshapes the feature-selection distribution, acting as an “entropy-based stabilizer.” By reducing the probability that dominant predictors monopolize splits, this increases ensemble variety, can promote deeper or more varied trees, and reduces inter-tree correlation. In our scaled beta setting with limited summaries, such redistribution ensures newly introduced distributional parameters ($\alpha,\beta$) are not overshadowed by obvious predictors and can inform split decisions.

Regularization by explicit penalties is classical, e.g., ridge regression and Tikhonov regularization~\citep{hoerl1970ridge,tikhonov1943stability}. From a KL/entropy viewpoint, Random Forests, while not Bayesian, can still exhibit entropy-driven smoothing akin to the stability perspective of~\citep{bousquet2002stability}. The probability-redistribution effect parallels function-smoothing in FDA; for instance, roughness penalties in free-knot spline estimation~\citep{magistris2024} avoid over-concentration and preserve balanced structure. Finally, the link we establish among implicit regularization, feature-selection probabilities, and classification accuracy resonates with recent work formulating hyperparameter optimization for randomized algorithms as a stochastic inverse problem solved via Ensemble Kalman Inversion~\citep{dunbar2025hyperparameter}.

\section{Preliminaries and data}
\label{sec:preliminaries}

\subsection{Event time-series data}
The main use-case dataset consists of daily snapshots of secondary concert ticket prices collected through the SeatGeek API, covering the period from May 2023 to May 2024. In total it includes approximately 130,000 events, 15,400 artists or acts, and 6,700 venues across the United States. For each event, price information was recorded from the initial sale date (or first available date) through the event date, yielding a comprehensive view of the pricing lifecycle. We denote the raw time-series data as
\[
\mathcal{D}_{\text{raw}} = \{\mathbf{x}_t\}_{t=1}^T,
\]
where $\mathbf{x}_t = [x_t^{(1)}, x_t^{(2)}, \dots, x_t^{(d)}]^\top $ is a vector of $d$ observed 
variables at time $t$, and $T$ is the total number of recorded time steps. Variables include artist, 
event date/time, venue, price collection date/time, mean price, median price, low price, high price, 
and listing count. This can equivalently be represented as a matrix:
\[
\mathbf{X} \in \mathbb{R}^{T \times d},
\]
with rows corresponding to time steps and columns to variables. Figure~\ref{fig:time_series_example} 
illustrates this representation using ticket price data for blues guitarist Buddy Guy at the Wilbur 
Theatre in Boston on 10/3/2023.

\begin{figure}[t]
  \centering
  \hspace*{\fill}
  \begin{subfigure}[c]{0.48\columnwidth}
      \includegraphics[width=\linewidth]{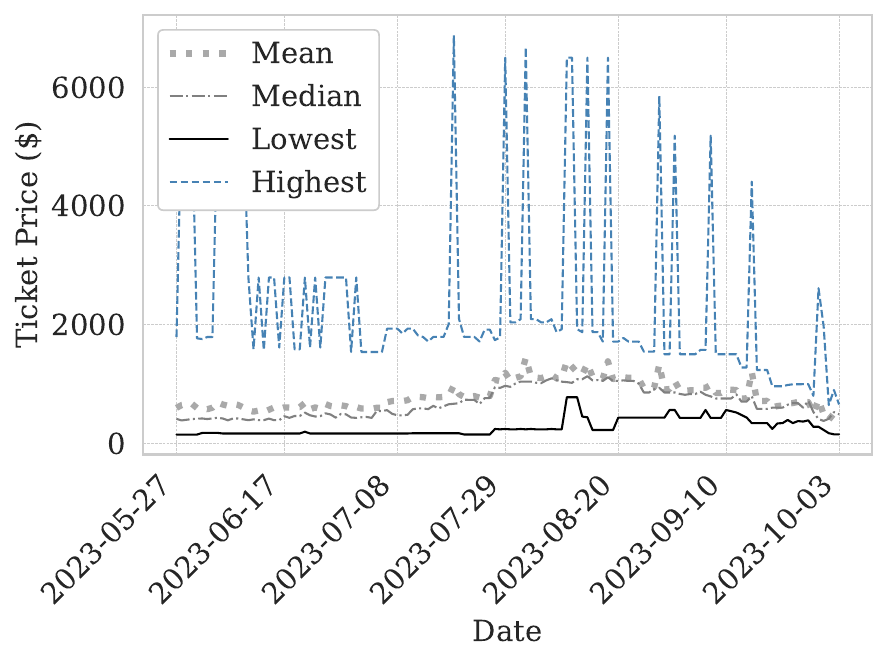}
      \caption{Ticket prices over time for Buddy Guy at Wilbur Theatre, Boston, MA, 10/3/2023, showing 
      the Mean, Median, Lowest, and Highest prices.\\
      \\
      \href{https://seatgeek.com}{\includegraphics[width=.1\textwidth]{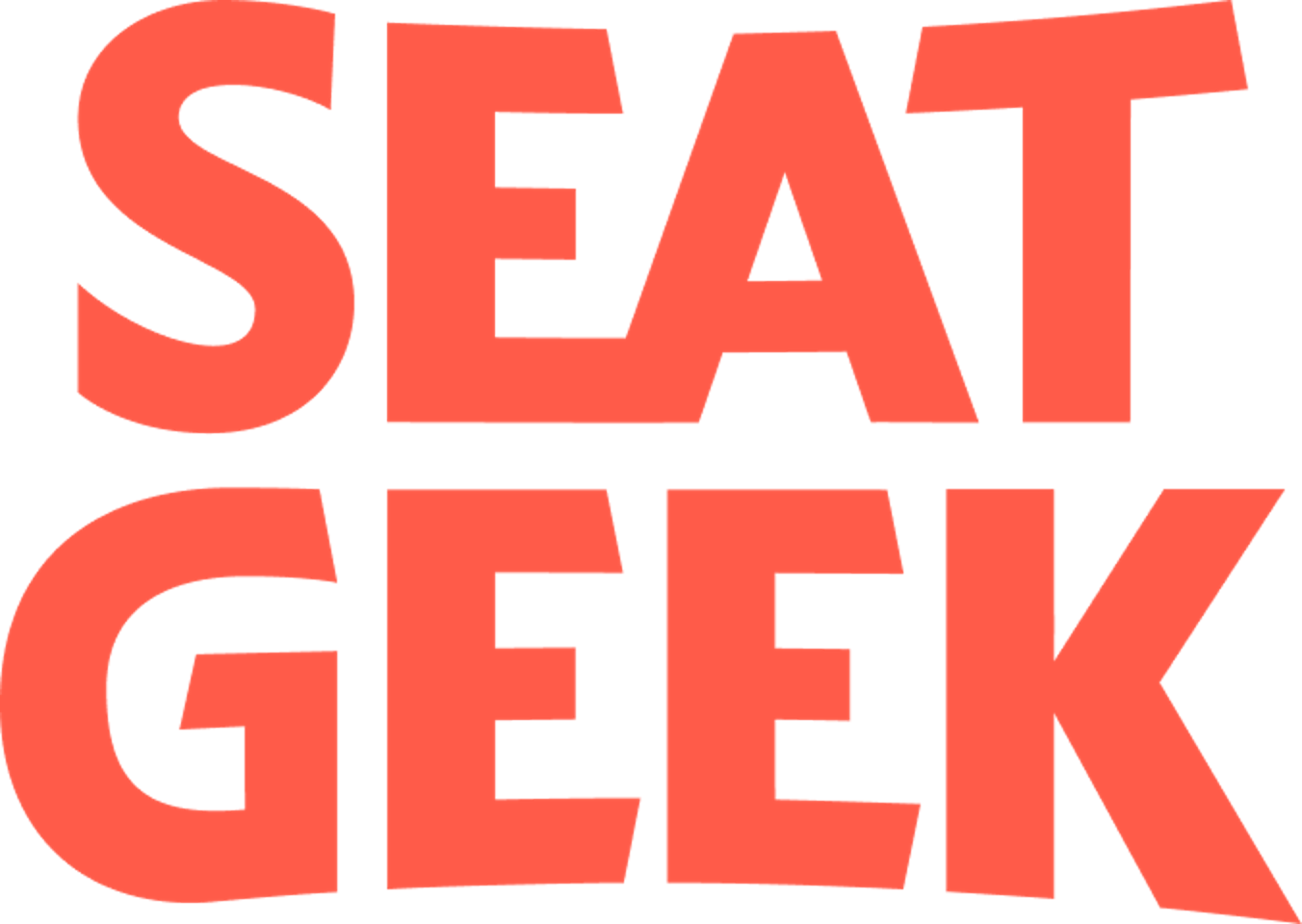}}
      }
      \label{fig:time_series_example}
  \end{subfigure}
  \hspace*{\fill}
  \begin{subfigure}[c]{0.48\columnwidth}
      \includegraphics[width=\linewidth]{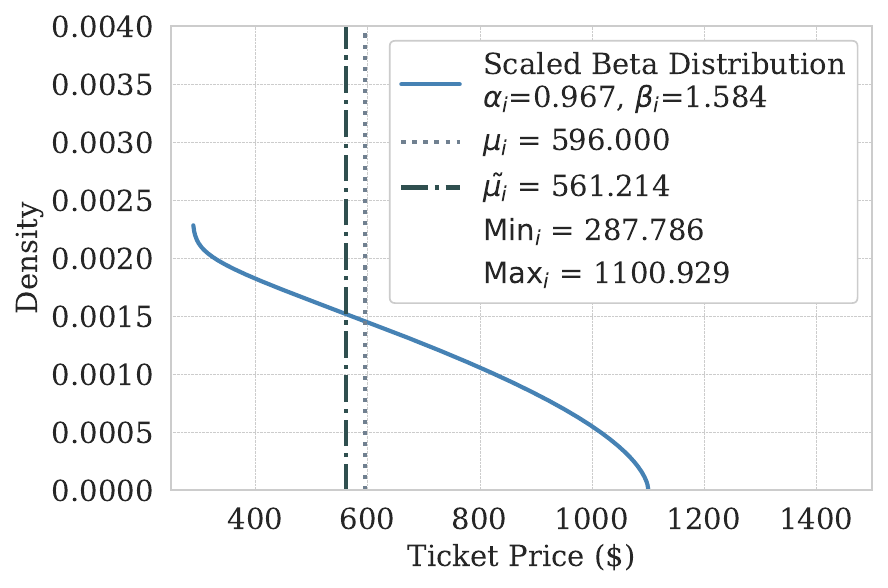}
      \caption{Estimated scaled beta distribution for Buddy Guy at Wilbur Theatre, Boston, MA, 10/3/2023. 
      The figure shows the estimated $\alpha_i$ and $\beta_i$ parameters, the mean price ($\mu_i$), 
      the median price ($\tilde{\mu}_i$), the lowest price ($\text{Min}_i$), and the highest price ($\text{Max}_i$). 
      These quantities define the scaled beta distribution at snapshot $T'$, leading up to event $i$ on 10/3/2023. 
      They represent the economic signature and corresponding feature value set for this event in the Random Forest model.}
      \label{fig:beta_distribution_plot}
  \end{subfigure}
  \hspace*{\fill}
  \caption{Event Overview, Buddy Guy at Wilbur Theatre, Boston, MA, 10/3/2023}
  \label{fig:event_overview}
\end{figure}

\subsection{Representations for classification}
\label{sec:data_representations}
To prepare the data for artist classification, we define a transformation $f$ from the raw time-series 
data into a structured feature space:
\[
f: \mathbb{R}^{T \times d} \rightarrow \mathbb{R}^{E \times n}.
\]
The resulting training dataset is given by:
\[
\mathcal{D}_{\text{train}} = f(\mathcal{D}_{\text{raw}}) = \{(\mathbf{z}_i,y_i)\}_{i=1}^E,
\]
where $\mathbf{z}_i \in \mathbb{R}^n$ is the derived feature vector for the $i$-th event, and $y_i$ 
is the corresponding artist label. This structured format is amenable to standard machine learning 
methods.

We frame artist classification as pairwise binary tasks (e.g., The Pixies vs. Billy Joel), using each model both for identification and as a test of whether artist-specific pricing distributions are separable. High accuracy signals a faithful representation of underlying distributional dynamics, while lower accuracy warrants further study. Random Forests suit this problem because they capture complex feature interactions and are robust to noise, and feature-based time-series classification shows that summary statistics can be strongly discriminative \citep{middlehurst2024bakeoff}. Toolkits like TSFresh \citep{christ2018tsfresh} and Catch22 \citep{lubba2019catch22} achieve high accuracy with handcrafted statistics, and ensemble methods such as Canonical Interval Forest (CIF) further improve performance via interval-based features \citep{middlehurst2024bakeoff}. Building on these insights, we summarize pricing time series into compact distributional representations that capture artist-specific patterns in pricing dynamics.

\topicstart{Basic statistical features:}
Initially, we derive basic summary statistics for the subsequence $T'$ leading up to each event:
\begin{equation}
x = \frac{1}{|T'|} \sum_{t \in T'} x_t,\quad x_t \in \{\mu_t, \tilde{\mu}_t, \text{Max}_t, \text{Min}_t\},
\end{equation}
where $\mu_t, \tilde{\mu}_t, \text{Max}_t, \text{Min}_t$ denote mean, median, maximum, and minimum 
prices respectively. This yields the feature vector and dataset:
\begin{equation}
\mathbf{z}_i = [\mu_i,\; \tilde{\mu}_i,\; \text{Max}_i,\; \text{Min}_i]^\top \qquad \mathcal{D}_{\text{basic}} = \{(\mathbf{z}_i,y_i)\}_{i=1}^E.
\end{equation}

\topicstart{Distribution-augmented features:}
Basic statistics alone omit nuanced distributional shapes. To address this, we estimate scaled beta 
distribution parameters $\alpha_i, \beta_i$ for each event-artist pair over 
$[\text{Min}_i, \text{Max}_i]$ (Section \ref{sec:estimation}). These parameters enrich the feature vector and yield the dataset:
\begin{equation}
\mathbf{z}_i = [\mu_i,\; \tilde{\mu}_i,\; \text{Max}_i,\; \text{Min}_i,\; 
\alpha_i,\; \beta_i]^\top \qquad \mathcal{D}_{\alpha\beta} = \{(\mathbf{z}_i,y_i)\}_{i=1}^E.
\end{equation}

\subsection{Implicit regularization via zero-variance features}
To implicitly regularize our Random Forest models, we augment dataset $\mathcal{D}_{\alpha\beta}$ with 
$n_{\mathrm{ZV}}$ zero-variance (constant-value) features $\mathbf{c}\in\mathbb{R}^{n_{\mathrm{ZV}}}$:
yielding the updated feature vector and dataset:
\begin{equation}
\mathbf{z}_i = [\mathbf{z}_i,\; \mathbf{c}]^\top \qquad \mathcal{D}_{\alpha\beta}^{(\text{reg})} = \{(\mathbf{z}_i,y_i)\}_{i=1}^E.
\end{equation}

Although counterintuitive, constant-value features subtly shift Random Forest feature-selection 
probabilities, implicitly promoting deeper, more robust trees and improved generalization, as explored 
in detail in Section \ref{sec:regularization}.

\topicstart{Additional validation (handwritten digits):}
To verify that implicit regularization effects generalize beyond ticket pricing data, we replicate our 
approach using the standard UCI handwritten digits dataset \citep{uci_digits}. Specifically, we form two 
analogous datasets: $\mathcal{D}_\delta$, containing the original digit features, and 
$\mathcal{D}_{\delta}^{(\text{reg})}$, which includes additional zero-variance features to mirror the 
ticket pricing methodology. This parallel validation confirms (Section \ref{sec:regularization}) the consistency and 
generalizability of the observed regularization effects across distinct data domains.

\section{Distribution recovery from limited statistics}
\label{sec:estimation}

Concert ticket price distributions are modeled for each event-artist pair using a scaled beta distribution. We estimate $\alpha_i$ and $\beta_i$ via composite quantile and moment matching from the SeatGeek API minimum, maximum, mean, and median. Scaled beta flexibly captures bounded shapes and provides a nuanced snapshot of pricing dynamics. Unlike \citep{wei2024latent}, which matches multiple quantiles in a latent space, and \citep{zhang2023quantile}, which parameterizes flows via quantiles, we fit $\alpha$ and $\beta$ using only the median and mean, an effective summary-based strategy on minimal data. \citep{dempster2024quant} also show the value of quantile features, though they require richer raw coverage than SeatGeek. Classical formulas in \citep{krishnamoorthy2016handbook} provide the beta moments and parameter relations we use. 

Two complementary validations are presented. Sections~\ref{sec:theoretical_bounds} and~\ref{sec:random_forest_results} test fidelity indirectly via out-of-sample classification accuracy consistent with the accuracy–fidelity theory. Section~\ref{sec:synthetic_validation} directly verifies the theory in a controlled synthetic pipeline: generate a scaled beta law $P_{\theta}$ with $\theta=(\alpha_i,\beta_i)$, compute $\mathrm{Min}_i,\mathrm{Max}_i,\mu_i,\tilde\mu_i$, reconstruct $\hat\theta=(\alpha'_i,\beta'_i)$ from $\mu_i,\tilde\mu_i$ after standardizing by $(\mathrm{Min}_i,\mathrm{Max}_i)$, then measure relationships among logistic loss, $\mathrm{TV}(P_{\hat\theta},P_{\theta})$, and $\mathrm{JS}(P_{\hat\theta},P_{\theta})$. Loss to divergence curves fall within $\tfrac{1}{2}\,\mathrm{TV}^2 \le \mathrm{JS} \le (\ln 2)\,\mathrm{TV}$ and align with the Lipschitz margin arguments, supporting accuracy as a practical surrogate for fidelity when only $\{\mathrm{Min}_i,\mathrm{Max}_i,\mu_i,\tilde\mu_i\}$ are available.

\subsection{Scaled beta distribution}
\label{sec:scaled_beta_distribution}

With variables 
$\text{Min}_i$, $\text{Max}_i$, $\mu_i$, and $\tilde{\mu}_i$ defined in Section \ref{sec:preliminaries}, the
probability density function (PDF) of the scaled beta distribution is given by:

\begin{equation}
f(x; \alpha_i, \beta_i, \text{Min}_i, \text{Max}_i) = 
\frac{(x - \text{Min}_i)^{\alpha_i - 1}\,(\text{Max}_i - x)^{\beta_i - 1}}
{(\text{Max}_i - \text{Min}_i)^{\alpha_i + \beta_i - 1}\,B(\alpha_i, \beta_i)},
\end{equation}
where $x$ is the ticket price, and $B(\alpha_i, \beta_i)$ is the beta function. This
formulation transforms the standard beta distribution from $[0,1]$ to
$[\text{Min}_i,\text{Max}_i]$. Such a scaled beta framework is also seen in other contexts, like
\citep{zhou2023beta}, who exploit beta distributions for bounded data in
generative modeling, underscoring the flexibility of beta-based parameterizations. Classical
discussions in \citep{krishnamoorthy2016handbook} elaborate on these beta
formulations and offer general moment-based inference approaches that set the stage for our
scaled version.

\subsection{Closed-form estimation: composite quantiles and moments}
\label{sec:quantile_moment_matching}

To estimate the parameters $\alpha_i$ and $\beta_i$, we reparameterize the scaled beta
distribution using the $\mu_i$ and $\tilde{\mu}_i$ provided by the SeatGeek API. 
The $\mu_i$ and $\tilde{\mu}_i$ for the beta distribution on $[\text{Min}_i,\text{Max}_i]$ are given by:
\begin{equation}
\mu_i \;=\; \text{Min}_i \;+\; \frac{\alpha_i}{\alpha_i + \beta_i}\,
(\text{Max}_i - \text{Min}_i),
\end{equation}
\begin{equation}
\tilde{\mu}_i \;\approx\; \text{Min}_i \;+\; (\text{Max}_i - \text{Min}_i)
\Bigl(\frac{\alpha_i - \tfrac13}{\,\alpha_i + \beta_i - \tfrac23\,}\Bigr).
\end{equation}
We first scale the mean and median to $[0,1]$:
\begin{equation}
s \;=\; \frac{\mu_i - \text{Min}_i}{\text{Max}_i - \text{Min}_i}, 
\quad q \;=\; \frac{\tilde{\mu}_i - \text{Min}_i}{\text{Max}_i - \text{Min}_i}.
\end{equation}
From the mean equation, we express $\beta_i$ in terms of $\alpha_i$ and $s$, and substitute into the median equation to obtain:
\begin{equation}
\beta_i \;=\; \alpha_i \Bigl(\frac{1 - s}{s}\Bigr) \quad\text{and}\quad q \;=\; \frac{\alpha_i - \tfrac13}{\,\tfrac{\alpha_i}{s} - \tfrac23\,}.
\end{equation}
This simplifies to:
\begin{equation}
\alpha_i \;=\; \frac{\,s\,(2q - 1)}{\,3\,(q - s)}, 
\quad \beta_i \;=\; \frac{\,(\,1 - s)\,(2q - 1)}{\,3\,(q - s)}.
\end{equation}

This method estimates the underlying price distribution from minimal statistics, capturing central tendency and shape, which improves predictive performance in downstream models. \citep{salimans2024moment} show that matching selected moments can preserve generative behavior, supporting our use of a mean plus a single quantile (the median) to influence the inferred distribution. While \citep{dempster2024quant} use many raw-data quantiles, our composite quantile-and-moment matching uses only SeatGeek API summaries. Canonical beta identities from \citep{krishnamoorthy2016handbook} validate fitting $\alpha$ and $\beta$ from so few statistics.

\citep{wei2024latent} show that matching multiple quantiles in latent space can further align distributions. Our approach is simpler, fitting scaled beta parameters directly in the observable ticket-price space using only the mean and median, yet it demonstrates how a small, well-chosen set of statistics yields useful distributional insight. \citep{lubba2019catch22} likewise find that compact feature sets can preserve classification strength, supporting our reliance on $\{\mu_i,\,\tilde{\mu}_i,\,\alpha_i,\,\beta_i\}$ with $\text{Min}_i$ and $\text{Max}_i$. An example snapshot appears in Fig.~\ref{fig:beta_distribution_plot}.

In terms of efficiency, the estimator runs in constant time $O(1)$ per event with a few arithmetic operations. Root-finding for $(\alpha,\beta)$ from mean and median typically needs $O(I)$ special-function evaluations with $I\approx 5\text{ to }20$ iterations. Constrained optimizers incur $O(I\,C_{\mathrm{grad}})$ from gradient computations and line search. Grid search costs $O(G)$ for a grid of size $G$ unless supported by precomputation and interpolation. Simulation-based or Bayesian methods are more general but scale as $O(N_{\mathrm{iter}})$ with larger constants. Our closed-form solution is the most efficient among these options and is well suited to large SeatGeek-scale applications.

\subsection{Kernel density estimation for distributional features}

Given the derivations for $\alpha_i$ and $\beta_i$ alongside the original statistical features, we compare these components across events for specific acts to identify where $\alpha_i$ and $\beta_i$ add predictive power. Larger distances between the feature distributions of two acts indicate greater separability. Consider two acts $\{1,2\}$ in a pairwise setting.

Formally, for a given act, define
\[
\mathbf{z}_i 
= \bigl[\mu_i,\;\tilde{\mu}_i,\;\text{Max}_i,\;
\text{Min}_i,\;\alpha_i,\;\beta_i\bigr]^\top
\]
for each event $i$. Let $x \in \{\mu, \tilde{\mu}, \text{Max}, \text{Min}, \alpha, \beta\}$. The kernel density estimate (KDE) for each feature is
\begin{equation}
\hat{f}_{x}(x)
= \frac{1}{E\,h}\sum_{i=1}^{E}K\Bigl(\frac{x - x_i}{h}\Bigr),
\end{equation}
where $K$ is the kernel, $E$ is the number of events, and $h$ is the bandwidth. Using the KDE for each feature and act, $\{\hat{f}^{act}_{\mu}, \hat{f}^{act}_{\tilde{\mu}}, \hat{f}^{act}_{\text{Max}}, 
\hat{f}^{act}_{\text{Min}}, \hat{f}^{act}_{\alpha}, \hat{f}^{act}_{\beta}\}$, we assess distributional similarity with Hellinger distance $H(\hat{f}^{1}_{x}, \hat{f}^{2}_{x})$ and Jensen-Shannon divergence $JS(\hat{f}^{1}_{x} \parallel \hat{f}^{2}_{x})$:
\begin{enumerate}
  \item Hellinger Distance:
  \begin{equation}
  H(\hat{f}^{1}_{x}, \hat{f}^{2}_{x})
  = 
  \frac{1}{\sqrt{2}}\sqrt{ \int
  \bigl(\sqrt{\hat{f}^{1}_{x}(t)} - \sqrt{\hat{f}^{2}_{x}(t)}\bigr)^2 dt }.
  \end{equation}
  \item Jensen-Shannon Divergence:
  \begin{equation}
  JS(\hat{f}^{1}_{x}\parallel \hat{f}^{2}_{x})
  = 
  \tfrac12\,KL(\hat{f}^{1}_{x}\parallel M)
  + \tfrac12\,KL(\hat{f}^{2}_{x}\parallel M),
  \end{equation}
  where $M = \tfrac12(\hat{f}^{1}_{x} + \hat{f}^{2}_{x})$ and
  \begin{equation}
  KL(\hat{f}^{1}_{x} \parallel \hat{f}^{2}_{x}) 
  = \int \hat{f}^{1}_{x}(t)\,\log\Bigl(\frac{\hat{f}^{1}_{x}(t)}{\hat{f}^{2}_{x}(t)}\Bigr)\,dt.
  \end{equation}
\end{enumerate}

These distances score each feature’s ability to separate acts. The estimated parameters $\alpha_i$ and $\beta_i$ often improve Random Forest accuracy by sharpening separability. For example, in Fig.~\ref{fig:kde_alpha_beta_params} comparing Drake and Olivia Rodrigo, the KDEs show $\alpha_i$ is more distinctive than the original features, which is reflected in both Hellinger and JS.

While \citep{dempster2024quant} extract many quantiles from raw data, our setting uses summary statistics to compute $\alpha_i$ and $\beta_i$. Krishnamoorthy’s discussion~\citep{krishnamoorthy2016handbook} emphasizes how beta shape parameters capture subtle differences. Here those shape and skew measures both aid classification and are validated by it.

Our use of the Jensen-Shannon distance here differs from Sect.~\ref{sec:theoretical_bounds}, where it supports formal bounds on convergence. In this subsection it is an empirical tool for comparing feature-density profiles across artists.

\begin{figure}[t]
    \centering
    \includegraphics[width=\textwidth]{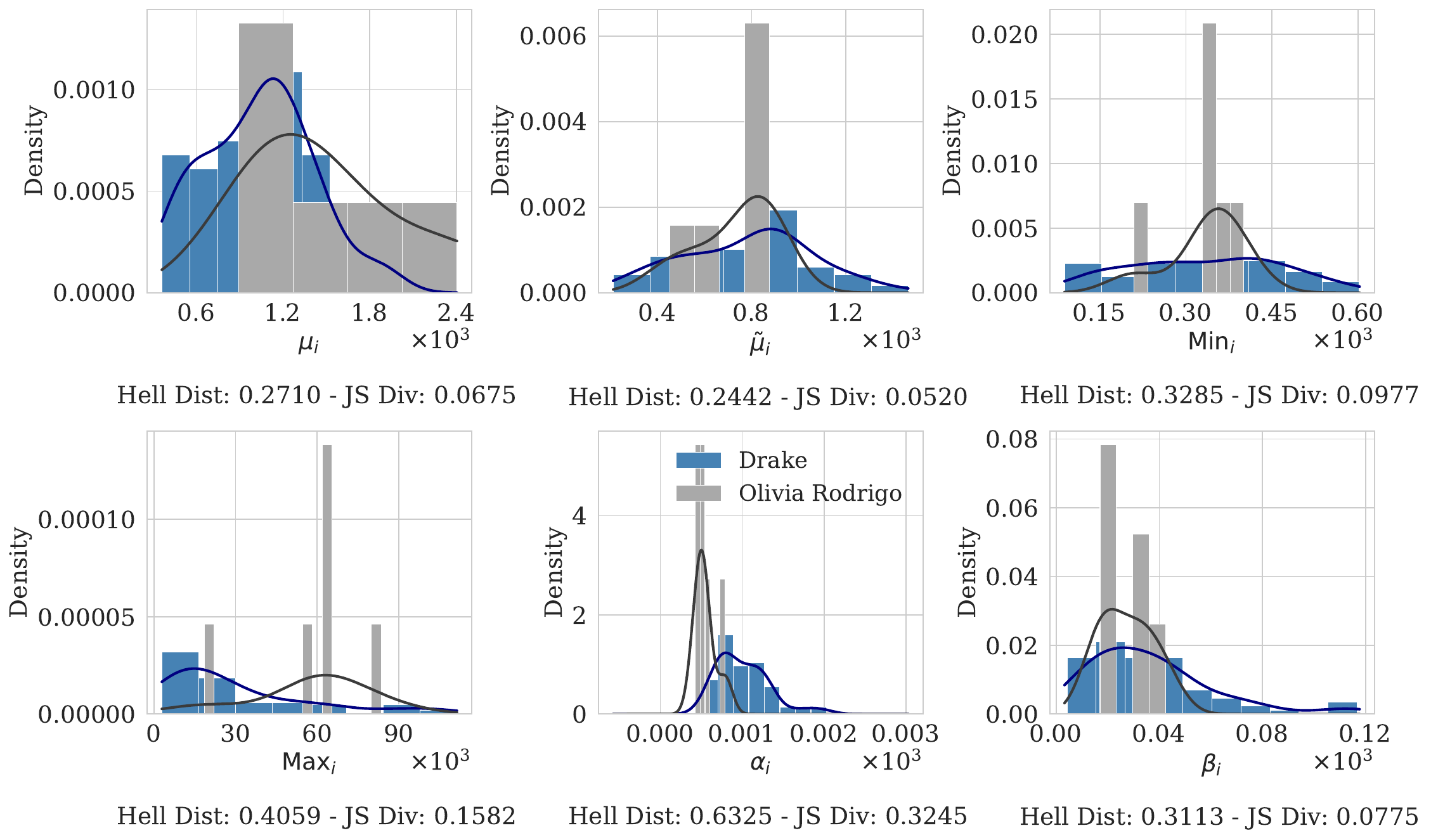}
    \caption{The plots show the distributions of each feature across all events for artists
      Drake and Olivia Rodrigo. The Hellinger distance and Jensen-Shannon divergence are calculated between
      each distribution. In this particular comparison of artists, the $\alpha_i$ 
      parameter offers the most distinctive density profile across all events, as indicated by the
      distribution distance metrics.}
      \label{fig:kde_alpha_beta_params}
    \vskip -0.2in
\end{figure}

\subsection{Validation via classification accuracy}
\label{sec:theoretical_bounds}

We justify classification as a validation tool by linking classification accuracy to parameter estimation accuracy. If the estimated parameters $\hat{\theta}$ are close to the true parameters $\theta$ then classification performance should rise. Conversely, high classification performance provides empirical evidence that the estimates capture the underlying distribution, so accuracy can serve as a measurable proxy for distributional correctness when ground truth is unavailable. This builds on Tsybakov's margin assumption \citep{tsybakov2004optimal} and the probabilistic bounds of Devroye et al. \citep{devroye1996probabilistic}. We extend these results by characterizing the connection between accuracy and distributional similarity through total variation distance and Jensen-Shannon divergence \citep{lin1991divergence}, showing that improvements in accuracy yield stronger guarantees of distributional reliability with quadratic convergence in the information-theoretic setting.

\begin{proposition}[Parameter Estimation Consistency via Classification Accuracy]
Let $\Theta \subset \mathbb{R}^d$ be the space of parameters, where each probability
distribution $P$ is parameterized by $\theta \in \Theta$. Define a feature map
\begin{equation}
\phi(\theta) 
= \Bigl(f_1(P), f_2(P), \dots, f_{k-d}(P), \theta \Bigr),
\end{equation}
where $f_i(P)$ represents summary statistics of $P$, such as $\text{Min}_i$, $\text{Max}_i$, $\tilde{\mu}_i$, and $\mu_i$. 
A classifier $f : \mathbb{R}^k \to \{0,1\}$ is trained to distinguish between two classes
based on $\phi(\hat{\theta})$, where $\hat{\theta}$ is an estimated parameter obtained from
observed data. If $f$ achieves a classification error rate $\varepsilon$, then there exists a function
$\delta(\varepsilon) \to 0$ as $\varepsilon \to 0$ such that the estimation error satisfies
\begin{equation}
\|\hat{\theta} - \theta\| \;\le\; \delta(\varepsilon).
\end{equation}
\end{proposition}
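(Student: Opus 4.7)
The approach is to chain three well-known relationships -- classification risk to total-variation distance, total variation to Jensen--Shannon divergence, and distributional distance to parameter distance via identifiability -- and then take $\delta(\varepsilon)$ to be the resulting composite modulus of continuity.

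First, I would formalize the setup: let $P_\theta^\phi$ denote the law of the feature vector $\phi(\hat\theta)$ induced when the true generating parameter is $\theta$, and suppose the two classes have true parameters $\theta^{(0)},\theta^{(1)}$ with estimates $\hat\theta^{(0)},\hat\theta^{(1)}$. By the classical Devroye et al.~\cite{devroye1996probabilistic} identity, the Bayes-optimal error for distinguishing the two true class laws equals $\tfrac12\bigl(1-\mathrm{TV}(P_{\theta^{(0)}}^\phi,P_{\theta^{(1)}}^\phi)\bigr)$. A standard plug-in lemma then bounds the excess risk of the classifier built on $\phi(\hat\theta)$ by $\mathrm{TV}(P_\theta^\phi,P_{\hat\theta}^\phi)$, so that the achieved error $\varepsilon$ forces
\begin{equation}
\mathrm{TV}\bigl(P_\theta^\phi, P_{\hat\theta}^\phi\bigr) \;\le\; c_1\,\varepsilon
\end{equation}
for an explicit constant $c_1$.

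Second, I would apply Lin's inequality~\cite{lin1991divergence}, $\mathrm{JS}(P\parallel Q)\le\mathrm{TV}(P,Q)\log 2$, together with the converse $\mathrm{TV}(P,Q)\le\sqrt{2\,\mathrm{JS}(P\parallel Q)/\log 2}$. This yields the quadratic refinement mentioned in the introduction: halving $\varepsilon$ improves JS fidelity by a factor of four. Tsybakov's margin condition~\cite{tsybakov2004optimal} can be brought in to sharpen constants when the class-conditional laws satisfy a separation assumption, but is not required for the qualitative claim.

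Third, I would invoke identifiability of the family $\theta\mapsto P_\theta^\phi$. Because the final $d$ coordinates of $\phi$ encode $\theta$ itself, the marginal of $P_\theta^\phi$ on those coordinates is concentrated at $\theta$, making the inverse map $P\mapsto\theta$ continuous essentially by construction. More formally, on a compact admissible region $\Theta_0\subset\Theta$ (here, scaled Beta parameters bounded away from $0$ and $\infty$) the Fisher information is non-degenerate, and a Le Cam local-expansion argument produces a modulus $\omega$ with $\omega(t)\le Ct$ and $\omega(0)=0$ satisfying $\|\hat\theta-\theta\|\le\omega\bigl(\mathrm{TV}(P_\theta^\phi,P_{\hat\theta}^\phi)\bigr)$. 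Setting $\delta(\varepsilon):=\omega(c_1\varepsilon)$ then completes the proof, with $\delta(\varepsilon)\to 0$ as $\varepsilon\to 0$.

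The hard part is Stage 3: producing a useful modulus $\omega$. Pure identifiability only guarantees \emph{existence} of a vanishing $\omega$; a linear rate requires a positive lower bound on Fisher information, which degenerates toward the boundary of the Beta parameter space ($\alpha$ or $\beta$ approaching $0$ or $\infty$). Restricting $\Theta$ to a compact region, justified empirically by the observed ranges of $\alpha_i,\beta_i$, resolves this at the cost of constants depending on the region. A secondary subtlety is that the plug-in lemma in Stage 1 must account for the classifier not being Bayes-optimal; a standard excess-risk decomposition handles this at the cost of a modest additional factor in $c_1$.
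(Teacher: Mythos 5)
There is a genuine gap, and it sits at the very first step of your chain. The ``standard plug-in lemma'' from Devroye et al.~\cite{devroye1996probabilistic} runs in the opposite direction from the way you use it: it bounds the \emph{excess risk} of a classifier built from estimated distributions by (a constant times) \(\mathrm{TV}(P_{\theta},P_{\hat\theta})\), i.e.\ distributional closeness implies small excess risk. It cannot be inverted into \(\mathrm{TV}(P_{\theta}^{\phi},P_{\hat\theta}^{\phi})\le c_1\varepsilon\); indeed, when the two classes are well separated, a classifier built from a badly misestimated \(\hat\theta\) can still achieve tiny error, so small \(\varepsilon\) does not by itself force the estimated and true laws to be TV-close. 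That implication is precisely the content of the proposition, so asserting it via a misdirected lemma leaves the core unproven. Your Stage~3 also conflicts with your own setup: since \(\phi\) embeds \(\theta\) deterministically in its last \(d\) coordinates, the induced feature laws \(P_{\theta}^{\phi}\) and \(P_{\hat\theta}^{\phi}\) are mutually singular whenever \(\hat\theta\neq\theta\) (their TV equals \(1\)), so the Fisher-information/Le Cam modulus \(\omega\) with \(\|\hat\theta-\theta\|\le\omega(\mathrm{TV})\) is unavailable for this family as described; the smooth-dominated-family picture you need is incompatible with the deterministic embedding. Stage~2 (Lin's inequality) is harmless but irrelevant to this statement.

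The paper's own proof avoids distributional distances entirely and stays in feature space: it assumes \(\phi\) is Lipschitz, so \(\|\phi(\hat\theta)-\phi(\theta)\|\le L\|\hat\theta-\theta\|\), imposes a margin separation \(\Delta\) under which a feature perturbation of size at least \(\Delta/2\) is what produces misclassification (so the risk dominates the probability of such a perturbation), and then uses Markov's inequality to relate that probability to \(\|\hat\theta-\theta\|\), yielding the explicit linear modulus \(\delta(\varepsilon)=\tfrac{\Delta}{2L}\,\varepsilon\). The TV and Jensen--Shannon machinery you reach for is what the paper uses \emph{after} this proposition, in the two subsequent theorems, and there the logic flows in the opposite direction from your Stage~1: from the proposition's parameter bound to \(\mathrm{TV}\le\tfrac{L}{2}\delta(\varepsilon)\) via Lipschitz continuity of \(\theta\mapsto P_{\theta}\), and then to JS via Lin's inequality. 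To repair your argument you would need either an explicit assumption converting small classification error into small distributional (or boundary) discrepancy --- which is essentially the margin-type assumption the paper adopts --- or a reverse-direction risk-to-TV inequality that the cited results do not provide.
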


\begin{proof}
\topicstart{Propagation of estimation error to feature space:} 
Define the ``true'' feature vector by $X^\ast = \phi(\theta)$ and the observed feature vector
by $X = \phi(\hat{\theta})$. By the Lipschitz condition,
\begin{equation}
\|\,X - X^\ast\| 
= \|\phi(\hat{\theta}) - \phi(\theta)\|
\;\le\; L\,\|\hat{\theta} - \theta\|.
\end{equation}

\topicstart{Relating feature perturbation to classification error:}
Under the margin separation assumption, the ideal feature vectors for two classes are separated
by at least $\Delta$. Suppose that a perturbation of size $\gamma$ in feature space is
tolerable without altering class assignment. Then,
\begin{equation}
R(f) \;\ge\; \mathbb{P}\bigl(\|\,X - X^\ast\| \;\ge\; \tfrac{\Delta}{2}\bigr).
\end{equation}

\topicstart{Bounding the probability of large feature perturbation:}
Using Markov's inequality,
\begin{equation}
\mathbb{P}\bigl(\|\,X - X^\ast\| \;\ge\; \tfrac{\Delta}{2}\bigr) 
\;\le\; \frac{2}{\Delta}\,\mathbb{E}\bigl[\|\,X - X^\ast\|\bigr].
\end{equation}
Combining with the Lipschitz bound,
\begin{equation}
\mathbb{P}(E) 
\;\le\; \tfrac{2}{\Delta}\,L\,\|\hat{\theta} - \theta\|.
\end{equation}
Then using the risk bound $R(f) \le \varepsilon$, we obtain:
\begin{equation}
\varepsilon \;\ge\; \tfrac{2L}{\Delta}\,\|\hat{\theta} - \theta\|.
\end{equation}
Rearranging,
\begin{equation}
\|\hat{\theta} - \theta\| \;\le\; \tfrac{\Delta}{2L}\,\varepsilon.
\end{equation}
Setting $\delta(\varepsilon) = \tfrac{\Delta}{2L}\,\varepsilon$, we see $\delta(\varepsilon)
\to 0$ as $\varepsilon \to 0$, proving the proposition.
\end{proof}

Building on this foundation, we can more precisely characterize the relationship between classification 
accuracy and distributional similarity. The following theorems extend our theoretical analysis to establish 
rigorous bounds between classifier error and common measures of distributional difference.

\begin{theorem}[Classification Accuracy and Total Variation Distance]
  \label{thm:total_variation}
Let $P_{\hat\theta}$ and $P_{\theta}$ denote distributions on a common sample space $\mathcal{X}$ with densities $p_{\hat\theta}(x)$ and $p_{\theta}(x)$, parameterized by the estimated parameters $\hat\theta$ and true parameters $\theta$, respectively. Let $\varepsilon$ be the misclassification error probability of a classifier built upon $P_{\hat\theta}$. Then the total variation distance between the distributions is bounded by:
\[
TV(P_{\hat\theta}, P_{\theta}) \;=\; \tfrac{1}{2}\!\int_{\mathcal{X}} \big|\,p_{\hat\theta}(x)-p_{\theta}(x)\,\big|\,dx \;\le\; \eta(\varepsilon),
\]
\begin{equation}
\text{where} \quad \eta(\varepsilon)\rightarrow 0 \quad \text{as} \quad \varepsilon\rightarrow 0.
\end{equation}
\end{theorem}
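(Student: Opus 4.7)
The plan is to chain the previous proposition with a regularity argument on the parametric family $\{P_\theta\}$ to translate parameter closeness into total variation closeness. The proposition already supplies $\|\hat{\theta}-\theta\| \leq \tfrac{\Delta}{2L}\varepsilon$, so it suffices to exhibit a nondecreasing function $g$ with $g(0)=0$ satisfying $TV(P_{\hat{\theta}}, P_{\theta}) \leq g(\|\hat{\theta}-\theta\|)$; the composition $\eta(\varepsilon) = g(\tfrac{\Delta}{2L}\varepsilon)$ then delivers the claimed bound, and continuity of $g$ at $0$ guarantees $\eta(\varepsilon)\to 0$.

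For the scaled Beta family used throughout this paper, the density $f(x;\alpha,\beta,\text{Min},\text{Max})$ is jointly smooth in $(\alpha,\beta)$ on any compact subset of the positive orthant. A standard argument bounding the $L^1$ norm of the gradient of the density with respect to $\theta$ then yields a local Lipschitz estimate $TV(P_{\theta_1}, P_{\theta_2}) \leq C\,\|\theta_1 - \theta_2\|$. Concretely, I would differentiate $\log f$ with respect to $\alpha$ and $\beta$, observe that the digamma terms from $\partial_\alpha \log B(\alpha,\beta)$ remain uniformly bounded on any compact set bounded away from zero, and combine the mean value theorem with the triangle inequality to bound $\tfrac12\int|f_{\theta_1} - f_{\theta_2}|\,dx$. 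Composing with the proposition delivers the linear bound $\eta(\varepsilon) = \tfrac{C\Delta}{2L}\,\varepsilon$, which clearly vanishes as $\varepsilon \to 0$.

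As a cross-check, one can also approach the result from the classification side: if the classifier built on $P_{\hat{\theta}}$ attains low error on samples actually drawn from $P_{\theta}$, its excess risk relative to the Bayes-optimal rule for $P_{\theta}$ upper-bounds a plug-in discrepancy that in turn controls $TV(P_{\hat\theta}, P_\theta)$ through a standard Devroye-type inequality. This indirect route avoids any parametric smoothness assumption on the family but produces constants that are harder to make explicit; I would include it chiefly as qualitative reinforcement of the conclusion.

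The main obstacle will be making the Lipschitz step quantitative and robust over the relevant region of $\Theta$. The scaled Beta density can blow up as $\alpha$ or $\beta$ approach zero, and the support $[\text{Min}_i, \text{Max}_i]$ itself depends on the data, so any clean Lipschitz constant must be established on a compact parameter subset that is preserved under estimation. I would handle this by restricting attention to the region in which the composite quantile-and-moment estimator actually operates, namely where the scaled mean $s$ and scaled median $q$ are both bounded away from $0$ and $1$ with $q \neq s$, so that the closed-form expressions for $\alpha_i$ and $\beta_i$ produced earlier remain uniformly bounded above and below. This restriction matches the empirical regime of the SeatGeek data and keeps $\eta(\varepsilon)$ explicit and monotone, which is the form most compatible with the subsequent Jensen--Shannon result.
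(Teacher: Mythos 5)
Your proposal follows essentially the same route as the paper: compose the proposition's bound \(\|\hat{\theta}-\theta\|\le\delta(\varepsilon)\) with Lipschitz continuity of the parametric family in total variation, obtaining \(\eta(\varepsilon)\) proportional to \(\delta(\varepsilon)\) and hence vanishing as \(\varepsilon\to 0\). The only difference is that the paper simply assumes the Lipschitz property (noting it typically holds for scaled Beta families), whereas you sketch an explicit justification via gradient bounds on a compact parameter region, which is a sound strengthening rather than a departure.
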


\begin{proof}
\topicstart{Misclassification and distributional differences:}
Consider a binary classifier with decision regions $C_{\hat{\theta}}$ and $C_{\theta}$ corresponding 
to estimated and true parameters, respectively, and assume equal class priors. The misclassification 
probability $\varepsilon$ is given by:
\begin{equation}
\varepsilon \;=\; \frac{1}{2}\int_{\mathcal{X}} \!\big[\,p_{\hat{\theta}}(x)\,I(x\in C_{\theta}) \;+\; 
p_{\theta}(x)\,I(x\in C_{\hat{\theta}})\,\big]\,dx,
\end{equation}
where $I(\cdot)$ is the indicator function. We compare $\varepsilon$ to the Bayes-optimal classification error $\varepsilon^*$, given explicitly by the total variation distance \citep{devroye1996probabilistic}:
\begin{equation}
\varepsilon^* \;=\; \frac{1}{2}\left[1 - TV(P_{\hat{\theta}}, P_{\theta})\right] 
\;=\; 
\frac{1}{2}\left[1 - \frac{1}{2}\int_{\mathcal{X}} \big|\,p_{\hat{\theta}}(x)-p_{\theta}(x)\,\big|\,dx\right].
\end{equation}
Since the achieved error $\varepsilon$ must exceed the Bayes-optimal error $\varepsilon^*$, we have:
\begin{equation}
\varepsilon \;\geq\; \varepsilon^* \;=\; \frac{1}{2}\left[1 - TV(P_{\hat{\theta}}, P_{\theta})\right].
\end{equation}
Rearranging terms explicitly isolates the total variation distance:
\begin{equation}
TV(P_{\hat{\theta}}, P_{\theta}) \ \geq\ 1 - 2\varepsilon.
\end{equation}
In particular, $TV(P_{\hat{\theta}}, P_{\theta}) \ge \max\{0,\,1-2\varepsilon\}$, so the bound is vacuous only when $1-2\varepsilon<0$. This provides a fundamental lower bound linking classification error and distributional differences. However, we also seek a meaningful upper bound.

\topicstart{Upper bound via parameter continuity:}
From the previous proposition, we have a direct parameter-based bound:
\[
\|\hat{\theta}-\theta\|\leq \delta(\varepsilon), 
\quad\text{with}\quad\delta(\varepsilon)\rightarrow 0\quad\text{as}\quad\varepsilon\rightarrow 0.
\]
Assume the parametric family $\{P_{\theta}\}$ is Lipschitz-continuous in parameters in total variation, 
meaning there exists a constant $L>0$ such that:
\begin{equation}
TV(P_{\hat{\theta}}, P_{\theta}) \;\leq\; \frac{L}{2}\,\|\hat{\theta}-\theta\|.
\end{equation}
This condition typically holds for parametric distributions like the scaled beta considered 
in this work, where densities vary smoothly with respect to parameters. Substituting the result from the Proposition, we get:
\begin{equation}
TV(P_{\hat{\theta}}, P_{\theta}) \;\leq\; \frac{L}{2}\,\delta(\varepsilon).
\end{equation}
Define $\eta(\varepsilon)=\frac{L}{2}\,\delta(\varepsilon)$, which clearly approaches zero as 
$\varepsilon\rightarrow 0$. Thus, we have established a rigorous upper bound directly relating classifier error to total 
variation distance:
\[
TV(P_{\hat{\theta}}, P_{\theta}) \;\leq\; \eta(\varepsilon),\quad 
\eta(\varepsilon)\rightarrow 0\quad\text{as}\quad\varepsilon\rightarrow 0.
\]
\end{proof}

While the Total Variation distance provides a natural measure of distributional difference, 
information-theoretic measures can offer additional insights with stronger convergence properties. 
The following theorem establishes an even more precise relationship using the Jensen-Shannon divergence.

\begin{theorem}[Classification Accuracy and Jensen-Shannon Divergence]
  \label{thm:jensen_shannon}
Under the same conditions as the previous theorem, the Jensen-Shannon divergence between the 
distributions can be bounded by:
\begin{equation}
JS(P_{\hat{\theta}}||P_{\theta}) \leq \xi(\varepsilon),
\quad\text{where}\quad \xi(\varepsilon)\rightarrow 0 \ \text{as}\ \varepsilon\rightarrow 0.
\end{equation}
Furthermore, in the small-error regime (i.e., for sufficiently small $\varepsilon$) and under mild regularity, this bound exhibits a quadratic convergence rate.
\end{theorem}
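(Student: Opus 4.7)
The plan is to combine the parameter-error bound from the Proposition with a second-order Taylor expansion of the Jensen--Shannon divergence in parameter space, exploiting the fact that $\theta' \mapsto D_{JS}(P_{\theta'}\,\|\,P_\theta)$ attains its minimum (value zero) at $\theta' = \theta$ and therefore has vanishing first-order variation there. This is the observation that promotes a linear parameter-error bound into a quadratic divergence bound.

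First, I would inherit from the Proposition the estimate $\|\hat\theta - \theta\| \le (\Delta/(2L))\,\varepsilon$, so that the parameter error decays linearly in $\varepsilon$. Next, under the smoothness of the scaled Beta family in $(\alpha,\beta)$ already used to justify the Lipschitz condition of the previous theorem, I would Taylor-expand $D_{JS}(P_{\hat\theta}\,\|\,P_\theta)$ around $\hat\theta = \theta$. Writing $h = \hat\theta - \theta$ and $M = \tfrac{1}{2}(P_{\hat\theta}+P_\theta)$, and noting that $M$ equals $P_{\theta + h/2}$ to first order in $h$, each KL summand in the JS definition reduces to $\tfrac{1}{2}(h/2)^\top I(\theta)(h/2)$ by the standard KL--Fisher expansion; the linear-in-$h$ contributions cancel by the symmetry of the midpoint construction, yielding
\begin{equation}
D_{JS}(P_{\hat\theta}\,\|\,P_\theta) \;=\; \tfrac{1}{8}\,h^\top I(\theta)\,h \;+\; O(\|h\|^{3}),
\end{equation}
where $I(\theta)$ is the Fisher information matrix of the scaled Beta family at $\theta$.

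Finally, letting $\lambda_{\max}$ be an upper bound for the largest eigenvalue of $I(\theta)$ on a compact neighborhood of the true parameter, I would chain the two estimates:
\begin{equation}
D_{JS}(P_{\hat\theta}\,\|\,P_\theta) \;\le\; \tfrac{\lambda_{\max}}{8}\,\|\hat\theta-\theta\|^{2} + O(\varepsilon^{3}) \;\le\; \tfrac{\lambda_{\max}\,\Delta^{2}}{32\,L^{2}}\,\varepsilon^{2} + O(\varepsilon^{3}),
\end{equation}
and set $\xi(\varepsilon) = C\,\varepsilon^{2}$ with $C = \lambda_{\max}\,\Delta^{2}/(32L^{2})$ plus lower-order corrections. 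This gives $\xi(\varepsilon)\to 0$ quadratically as $\varepsilon\to 0$, which is the claim.

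The main obstacle will be promoting the Taylor expansion from a pointwise approximation to an honest integrated inequality: the log-integrands can diverge near the endpoints of $[\text{Min}_i,\text{Max}_i]$, so I would either restrict to the interior-mode regime ($\alpha,\beta > 1$), where the scaled Beta density is bounded away from zero on compact subsets of its support, or control the remainder by dominated convergence using a uniform envelope for $\partial^{k}_{\theta}\log P_\theta$ on a compact sub-neighborhood of $\theta$. Once this regularity is secured (both conditions hold on the interior of the scaled Beta parameter space, which is where the composite quantile--moment estimator of Section~\ref{sec:estimation} produces $\hat\theta$), the remaining steps are mechanical, and the quadratic rate follows by squaring the linear parameter bound.
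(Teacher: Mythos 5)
Your proposal is correct in substance, but it takes a genuinely different route from the paper. The paper never expands the divergence in parameter space: it invokes Lin's inequality, $D_{JS}(P_{\hat{\theta}}\|P_{\theta}) \leq \tfrac{1}{2\ln 2}\,TV^2(P_{\hat{\theta}},P_{\theta})$, and then simply squares the total-variation bound $TV \leq \tfrac{L}{2}\delta(\varepsilon)$ inherited from the preceding theorem, obtaining $\xi(\varepsilon) = \tfrac{L^2}{8\ln 2}\,\delta^2(\varepsilon)$ with $\delta(\varepsilon)$ linear in $\varepsilon$ by the Proposition. That argument is global and non-asymptotic: it holds for every $\varepsilon$, needs no regularity beyond the Lipschitz-in-parameter condition on TV already assumed, and in particular never touches Fisher information, remainder terms, or endpoint behavior of the Beta density. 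Your route instead exploits the vanishing first-order variation of $\theta' \mapsto D_{JS}(P_{\theta'}\|P_\theta)$ at $\theta' = \theta$ and the standard $f$-divergence expansion $D_{JS} = \tfrac{1}{8}h^{\top}I(\theta)h + O(\|h\|^3)$, then chains with the linear parameter bound. What this buys you is a sharper, curvature-aware local constant ($\lambda_{\max}/8$ rather than a Lipschitz constant squared) and a cleaner conceptual explanation of \emph{why} the rate is quadratic (the divergence is minimized at the truth); what it costs is extra hypotheses (finite Fisher information, uniform control of third-order log-density derivatives on a neighborhood, the interior-regime or dominated-convergence argument you flag for the support endpoints) and the fact that your bound is only asymptotic, with an unquantified $O(\varepsilon^3)$ correction, whereas the paper's is an honest inequality at every $\varepsilon$. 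Your handling of the regularity obstacle is appropriately cautious — for the Beta family the score functions involve $\ln x$ and digamma terms with finite second moments for all $\alpha,\beta > 0$, so the expansion is available without restricting to $\alpha,\beta > 1$, though making the remainder uniform still requires confining $\hat{\theta}$ to a compact neighborhood as you propose. Both arguments deliver $\xi(\varepsilon) \asymp \varepsilon^2$, so the theorem's claim stands either way.
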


\begin{proof}
\topicstart{Relationship between JS divergence and total variation distance:}
The Jensen-Shannon divergence between distributions $P_{\hat{\theta}}$ and $P_{\theta}$ is 
defined as:
\begin{equation}
JS(P_{\hat{\theta}}||P_{\theta}) = \frac{1}{2}KL(P_{\hat{\theta}}||M) + 
\frac{1}{2}KL(P_{\theta}||M),
\end{equation}
where $M = \frac{1}{2}(P_{\hat{\theta}} + P_{\theta})$ is the mixture distribution, and $KL$ 
is the Kullback–Leibler divergence. We recall (see, e.g., \citep{lin1991divergence}) that, globally, 
$JS$ is Lipschitz in total variation:
\begin{equation}
\label{eq:js-linear-tv}
JS(P_{\hat{\theta}}||P_{\theta}) \ \le\ (\ln 2)\,TV(P_{\hat{\theta}}, P_{\theta}),
\end{equation}
where $TV(P_{\hat{\theta}}, P_{\theta})$ is the total variation distance as defined in the 
previous theorem.

\topicstart{Applying the total variation bound from the previous theorem:}
From the previous theorem, we have established that
\[
TV(P_{\hat{\theta}}, P_{\theta}) \leq \eta(\varepsilon) = \frac{L}{2}\,\delta(\varepsilon).
\]
Substituting this into \eqref{eq:js-linear-tv} yields the global vanishing bound
\begin{equation}
\label{eq:global-linear}
JS(P_{\hat{\theta}}||P_{\theta})
\ \le\ 
(\ln 2)\,\eta(\varepsilon)
\ =\
(\ln 2)\,\frac{L}{2}\,\delta(\varepsilon)
\ :=\ 
\xi(\varepsilon),
\end{equation}
so that $\xi(\varepsilon)\to 0$ as $\varepsilon\to 0$.

\topicstart{Quadratic convergence in the small-error regime:}
Moreover, when the discrepancy is small, $JS$ admits a second-order (quadratic) control in TV under mild regularity (e.g., the relevant densities are bounded away from $0$ and $\infty$, or the likelihood ratio is bounded). Thus there exist constants $C>0$ and $\tau>0$ (depending only on those regularity parameters) such that
\begin{equation}
\label{eq:local-quadratic}
\text{if}\quad TV(P_{\hat{\theta}},P_{\theta}) \le \tau
\quad\text{then}\quad
JS(P_{\hat{\theta}}||P_{\theta}) \ \le\ C\,TV^2(P_{\hat{\theta}},P_{\theta}).
\end{equation}
Combining \eqref{eq:local-quadratic} with $TV(P_{\hat{\theta}}, P_{\theta}) \le \eta(\varepsilon)$ from above gives, whenever $\eta(\varepsilon)\le \tau$,
\begin{equation}
JS(P_{\hat{\theta}}||P_{\theta})
\ \le\ 
C\,\eta^2(\varepsilon)
\ =\
C\left(\frac{L}{2}\,\delta(\varepsilon)\right)^2
\ :=\
\tilde\xi(\varepsilon),
\end{equation}
which shows a quadratic convergence rate in the small-error regime since $\delta(\varepsilon)\to 0$ as $\varepsilon\to 0$.
\end{proof}

The progression from total variation distance to Jensen-Shannon divergence reveals a finer relationship: globally linear in error and quadratic in the small-error regime under mild regularity. For ticket pricing this means that as classification accuracy improves, i.e., as $\varepsilon$ decreases, the estimated beta parameters approach the true parameters at least linearly, with an accelerating quadratic rate once in the small-error regime. The Jensen-Shannon divergence offers advantages over total variation:
(1) tighter small-error convergence via the quadratic relationship while retaining a global linear guarantee,
(2) a natural information-theoretic view of distinguishability,
(3) bounded range $[0,\log 2]$ or $[0,1]$ in bits,
(4) symmetry unlike KL. 

In our ticket pricing context, modest gains in classifier accuracy produce increasingly large improvements in agreement between estimated and true beta parameters, especially once accuracy is high. This supports parameter estimation from limited statistics and strengthens the theoretical basis for using the estimated $\alpha_i$ and $\beta_i$ in downstream tasks. By relating classification error to both total variation and Jensen-Shannon divergence, these results connect practical machine-learning performance with rigorous statistical inference and tie learning theory to our modeling objectives.

\topicstart{Application to ticket pricing and artist classification:}
In our setting, $P$ is a scaled beta distribution for ticket prices with parameters $\theta=(\alpha_i,\beta_i)$. The summary statistics $f_i(P)$ are $\text{Min}_i$, $\text{Max}_i$, $\mu_i$, and $\tilde{\mu}_i$, and the classifier $f$ distinguishes artists using pricing information. According to the theorem, artist classification accuracy validates the estimated parameters $\hat{\alpha}_i$ and $\hat{\beta}_i$. Strong classifier performance implies that the reconstructed beta distribution is close to the true pricing distribution. Empirical results in the next section confirm this, showing that including $\alpha_i$ and $\beta_i$ in the feature set improves accuracy. This aligns with \citep{tsybakov2004optimal,devroye1996probabilistic}, which link precise estimation to better prediction, and with \citep{krishnamoorthy2016handbook}, which notes that accurate beta inference can rely on a few well chosen statistics. Tight approximation of $\alpha_i$ and $\beta_i$ produces measurable gains in classification accuracy.

\subsection{Random Forest results}
\label{sec:random_forest_results}

Random Forests build multiple decision trees and aggregate their predictions to improve generalization \citep{ho1998rsm, breiman2001random}. For input $x$,
\begin{equation}
\hat{y} = \frac{1}{B}\,\sum_{b=1}^{B} h_b(x),
\end{equation}
where $h_b(x)$ is the $b$th tree and $B$ is the number of trees. Trees train on bootstrap samples and use random feature subsets at each split, which reduces variance and limits overfitting relative to single trees. We use the standard scikit-learn implementation \citep{scikit-learn}.

In our classification task we identify the artist from pricing information. Each Random Forest is trained on a pair of artists. This dyadic setup mitigates class imbalance compared with one versus all classification.

\topicstart{Empirical classification improvements with parameter estimates:}
To assess the effect of including $\alpha_i$ and $\beta_i$ in the feature set, we performed a pairwise comparison of Random Forest classifiers trained on two sets. The first, $\mathcal{D}_{\text{basic}}$, contains $\mu_i$, $\tilde{\mu}_i$, $\text{Min}_i$, and $\text{Max}_i$ as in Sect.~\ref{sec:preliminaries}. The second, $\mathcal{D}_{\alpha \beta}$, augments these with estimated $\alpha_i$ and $\beta_i$ from the scaled beta distribution. The task is binary artist classification with target $y_i$. We compare overall accuracy and count how many models improve when using $\mathcal{D}_{\alpha \beta}$. We used $N_{\text{pair}} = 20{,}000$ paired observations of the same artist classification problem solved with both $\mathcal{D}_{\text{basic}}$ and $\mathcal{D}_{\alpha \beta}$, spanning $N_{\text{artist}} = 954$ unique artists. Each pair has $N^{\text{train}}_{\text{event}} \approx 37$ training events and $N^{\text{test}}_{\text{event}} \approx 10$ testing events under an 80/20 split, yielding $N_{\text{models}} = 20{,}000$ Random Forest models. The dataset is representative and additional subsets produced similar results. Hyperparameter specifications are available in the accompanying GitHub repository, and other configurations yielded comparable conclusions. Outcomes per pair fall into three categories: $\mathcal{D}_{\alpha \beta}$ better, equal, or worse than $\mathcal{D}_{\text{basic}}$. Of $N_{\text{pair}} = 20{,}000$ pairs we observed $12{,}739$ ties, $4{,}488$ better, and $2{,}773$ worse, giving the effective sample size $N' = 7{,}261$ for statistical testing.

\begin{table}[t]
\centering
\caption{Summary of statistical results comparing $\mathcal{D}_{\alpha \beta}$ to $\mathcal{D}_{\text{basic}}$.}
\label{tab:stats-table}
\begin{tabular}{lcc}
\toprule
Statistic & Value \\
\midrule
Effective sample size ($N'$) & 7,261 \\
$n_{\text{better}}$ & 4,488 \\
$n_{\text{worse}}$ & 2,773 \\
Mean ($\mu = N'/2$) & 3,630.5 \\
Std. dev. ($\sigma$) & 42.61 \\
$Z$-score & 20.13 \\
p-value & $< 10^{-89}$ \\
\bottomrule
\end{tabular}
\end{table}

Under $H_0$ with $p = 0.5$, a normal approximation gives mean $3{,}630.5$ and standard deviation $42.61$. We use the standard binomial test via the normal approximation with a continuity correction, customary for large $N'$ and $p=0.5$, which yields $Z \approx 20.13$ and $p < 10^{-89}$, as summarized in Table~\ref{tab:stats-table}. We observe a statistically significant improvement using $\mathcal{D}_{\alpha \beta}$ (Fig.~\ref{fig:alpha_beta_improvement}). The mean accuracy gain across $20{,}000$ models is modest yet consistent (Fig.~\ref{fig:subset_accuracy_comparison}). The large number of improved cases together with the extremely low p-value indicates a nontrivial effect. Adding $\alpha_i$ and $\beta_i$ improves artist discrimination and highlights the value of distributional features in dynamic pricing.

\begin{figure}[t]
  \centering
  \hspace*{\fill}
  \begin{subfigure}[c]{0.48\linewidth}
    \includegraphics[width=\linewidth]{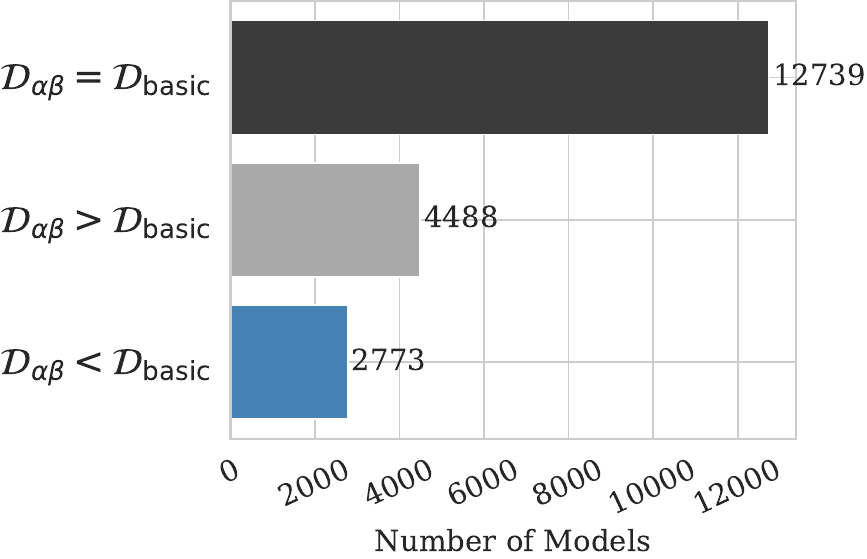}
    \caption{Random Forest model performance comparisons for $N_{\text{models}} = 20{,}000$,
    using the typical default feature selection size of $m = \text{round}(\sqrt{n}) = 2$. The bars show the number of cases in which models trained on $\mathcal{D}_{\alpha \beta}$ performed the same, better, or worse than models trained on $\mathcal{D}_{\text{basic}}$.}
    \label{fig:alpha_beta_improvement}
  \end{subfigure}
  \hspace*{\fill}
  \begin{subfigure}[c]{0.48\linewidth}
    \includegraphics[width=\linewidth]{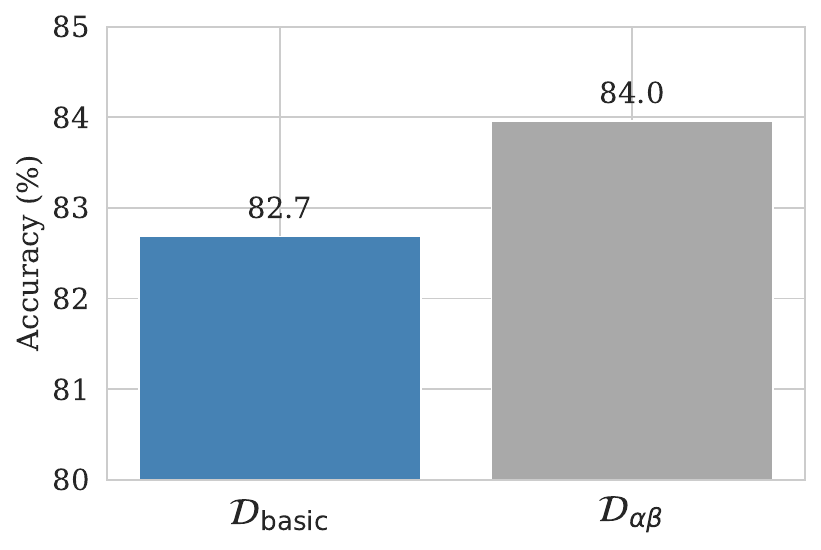}
    \caption{Percent Accuracy by Feature Subset for $N_{\text{models}} = 20{,}000$. Although the overall accuracy difference between $\mathcal{D}_{\alpha \beta}$ and $\mathcal{D}_{\text{basic}}$ appears relatively small, it is statistically significant given the large sample size and the substantial proportion of models showing improvement.}
    \label{fig:subset_accuracy_comparison}
  \end{subfigure}
  \hspace*{\fill}
  \caption{Random Forest performance comparison using $\mathcal{D}_{\text{basic}}$ vs. $\mathcal{D}_{\alpha \beta}$ features.}
  \label{fig:rf_comparison_ab_vs_basic}
\end{figure}

This supports the view that while \citep{dempster2024quant} exploit rich raw quantiles, limited summary statistics with principled estimation offer strong distributional characterization. Related quantile and moment findings \citep{zhang2023quantile,salimans2024moment} and classical beta inference \citep{krishnamoorthy2016handbook} align with these results, where carefully chosen statistics yield accurate parameters and improved classification.

\topicstart{Case study, Beyoncé vs. Ed Sheeran:}
To illustrate the value of incorporating estimated $\alpha_i$ and $\beta_i$, consider an Ed Sheeran concert on 6/29/2023 at the Boch Center Wang Theatre in Boston, MA. With only $\text{Min}_i$, $\text{Max}_i$, $\mu_i$, and $\tilde{\mu}_i$, the summary resembles a typical Beyoncé profile and the model misclassifies the event as Beyoncé (Fig.~\ref{fig:beyonce_sheeran_stats_comparison}). Adding the estimated parameters yields a scaled beta that exposes a sharper price drop characteristic of Ed Sheeran, which corrects the error (Fig.~\ref{fig:beyonce_sheeran_beta_comparison}). This demonstrates improved accuracy and robustness from integrating estimated distribution parameters into the Random Forest framework and it validates the inferred distribution.

\begin{figure}[t]
  \centering
  \hspace*{\fill}
  \begin{subfigure}[c]{0.48\linewidth}
    \includegraphics[width=\linewidth]{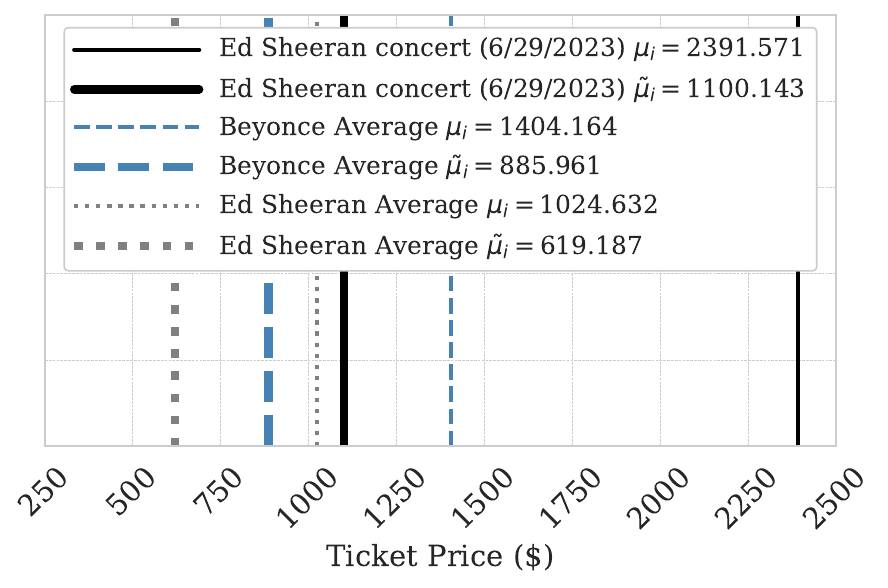}
    \caption{Ticket pricing summary for the Ed Sheeran concert on 6/29/2023 at Boch Center Wang Theatre, Boston, MA, using basic statistics ($\mu_i$, $\tilde{\mu}_i$, $\text{Min}_i$, $\text{Max}_i$). Without distribution parameters, the mean and median prices align closely with typical Beyoncé concert values, leading to misclassification.}
    \label{fig:beyonce_sheeran_stats_comparison}
  \end{subfigure}
  \hspace*{\fill}
  \begin{subfigure}[c]{0.48\linewidth}
    \includegraphics[width=\linewidth]{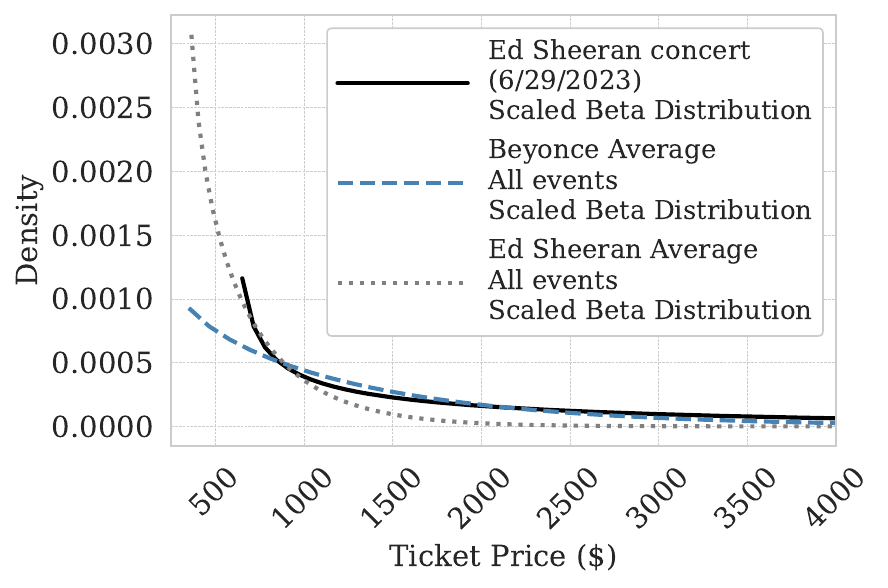}
    \caption{Comparison of scaled beta distributions after estimating $\alpha_i$ and $\beta_i$ parameters for the Ed Sheeran concert (6/29/2023). The estimated distribution shows a more pronounced price drop relative to the typical Beyoncé concert profile, accurately reflecting Ed Sheeran’s pricing pattern and correcting the previous misclassification.}
    \label{fig:beyonce_sheeran_beta_comparison}
  \end{subfigure}
  \hspace*{\fill}
  \caption{Statistical vs. distributional pricing representations for the Ed Sheeran concert on 6/29/2023 at Boch Center Wang Theatre.}
  \label{fig:beyonce_sheeran_comparison}
\end{figure}

\subsection{Synthetic ground-truth validation (scaled beta)}
\label{sec:synthetic_validation}

To verify the accuracy–fidelity link without relying on unknown SeatGeek densities, we construct a controlled experiment where the ground-truth law is a scaled beta on $[\text{Min}_i,\text{Max}_i]$ with parameters $\theta=(\alpha_i,\beta_i)$. We then mimic the estimation pipeline of Sect.~\ref{sec:estimation} to reconstruct $\hat\theta=(\alpha'_i,\beta'_i)$ from the limited statistics $(\mu_i,\tilde\mu_i,\text{Min}_i,\text{Max}_i)$.

\topicstart{Loss–divergence relations:}
Let $P_{\theta}$ and $P_{\hat\theta}$ denote the true and reconstructed densities on $[\text{Min}_i,\text{Max}_i]$. We measure total variation and Jensen-Shannon divergence
\[
\mathrm{TV}(P_{\hat\theta},P_{\theta})
=\tfrac12\!\int_{\text{Min}_i}^{\text{Max}_i}\!\!\big|p_{\hat\theta}(x)-p_{\theta}(x)\big|\,dx,
\qquad
\mathrm{JS}(P_{\hat\theta}\,\|\,P_{\theta}),
\]
and, using natural logarithms (JS in nats), apply the two-sided bounds
\begin{equation}
\tfrac12\,\mathrm{TV}^2(P_{\hat\theta},P_{\theta})
\;\le\;
\mathrm{JS}(P_{\hat\theta}\,\|\,P_{\theta})
\;\le\;
(\ln 2)\,\mathrm{TV}(P_{\hat\theta},P_{\theta}).
\end{equation}
For a labeled example with predicted probability $\hat p_i$ and logistic loss $\ell_i=-\big[y_i\log\hat p_i+(1-y_i)\log(1-\hat p_i)\big]$, the exact identity
\begin{equation}
\mathrm{TV}_i=\lvert y_i-\hat p_i\rvert \;=\; 1-e^{-\ell_i}
\end{equation}
implies $\ell=-\ln(1-\mathrm{TV})$ (small-error expansion: $\mathrm{TV}_i=\ell_i+O(\ell_i^2)$). Composing with the bounds yields $\mathrm{JS}_i=\tfrac12\,\ell_i^2+O(\ell_i^3)$ in the small-error regime, i.e., TV contracts linearly with loss, while JS contracts quadratically. These identities are model-agnostic; because our Random Forest evaluations use probabilistic cross-entropy (log loss) computed from the model’s predicted class probabilities, the loss–TV/JS relationships above apply verbatim to the Random Forest results in this work.

\topicstart{Chain of implications:}
The validation proceeds along a single flow from parameters to divergence: Lipschitz continuity of the feature map and density w.r.t.\ $(\alpha,\beta)$ controls how parameter error propagates (Sect.~\ref{sec:theoretical_bounds}); Theorem~\ref{thm:total_variation} (classification–TV control) links misclassification to distributional discrepancy; and JS sharpens TV quadratically at small error in Theorem~\ref{thm:jensen_shannon}. We summarize this pipeline as
\[
\underbrace{\sqrt{(\alpha'-\alpha)^2 + (\beta'-\beta)^2}}_{\text{Scaled beta parameter error}}
\;\xrightarrow{\;L\;}\;
\underbrace{\varepsilon = \mathbb{P}(E)}_{\text{classification error}}
\;\xrightarrow{\text{Theorem~\ref{thm:total_variation}}}\;
\underbrace{\mathrm{TV}(P_\theta,P_{\hat\theta})}_{\text{distributional distance}}
\;\xrightarrow{\text{Theorem~\ref{thm:jensen_shannon}}}\;
\underbrace{\mathrm{JS}(P_\theta,P_{\hat\theta})}_{\text{divergence fidelity}}.
\]

Figure~\ref{fig:beta_overlay_true_vs_perturbed} anchors the construction by overlaying the true density with a divergence-sorted family of reconstructed curves, making visible how departures from $(\alpha_i,\beta_i)$ alter the shape on the observed range. Figure~\ref{fig:js_vs_tvd_envelope} plots empirical $\mathrm{JS}$ against $\mathrm{TV}$ with a secondary loss axis $\ell=-\ln(1-\mathrm{TV})$, and the curve lies between the quadratic lower and linear upper bounds, confirming the globally linear and small-error quadratic regimes predicted by the theory. Together, these figures provide a compact, direct verification that out-of-sample accuracy is a reliable surrogate for distributional fidelity when only $(\mu_i,\tilde\mu_i,\text{Min}_i,\text{Max}_i)$ are available.

\begin{figure}[t]
  \centering
  \hspace*{\fill}
  \begin{subfigure}[c]{0.48\linewidth}
    \includegraphics[width=\linewidth]{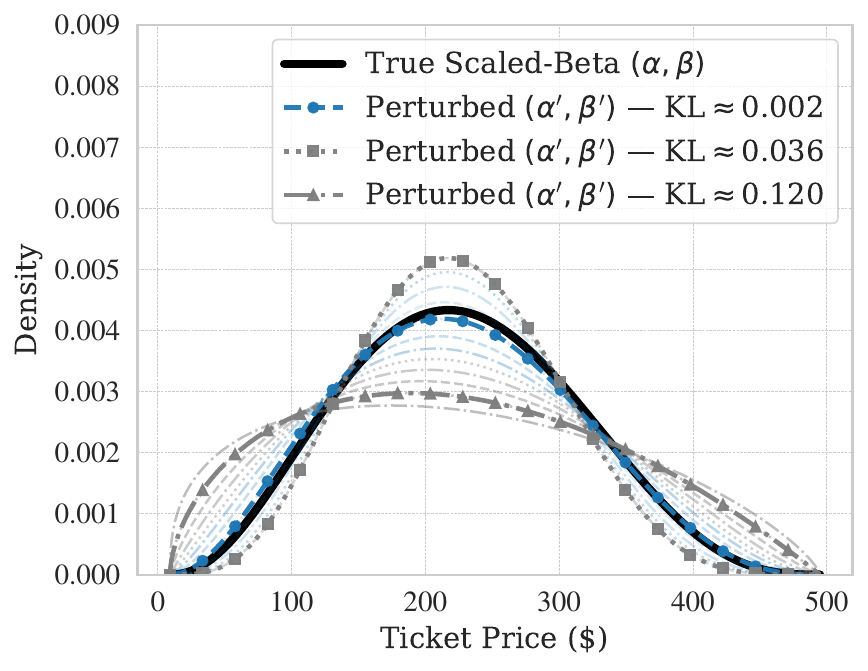}
    \caption{True vs.\ reconstructed scaled beta densities (divergence-sorted). 
    Ground-truth $P_{\theta}$ (black) on $[\text{Min}_i,\text{Max}_i]$ with $(\alpha_i,\beta_i)$ 
    and reconstructed $P_{\hat\theta}$ curves (blue$\rightarrow$grey) obtained from 
    $(\mu_i,\tilde\mu_i,\text{Min}_i,\text{Max}_i)$. 
    Shading reflects increasing divergence to truth and visually motivates the applied distance measures.}
    \label{fig:beta_overlay_true_vs_perturbed}
  \end{subfigure}
  \hspace*{\fill}
  \begin{subfigure}[c]{0.48\linewidth}
    \includegraphics[width=\linewidth]{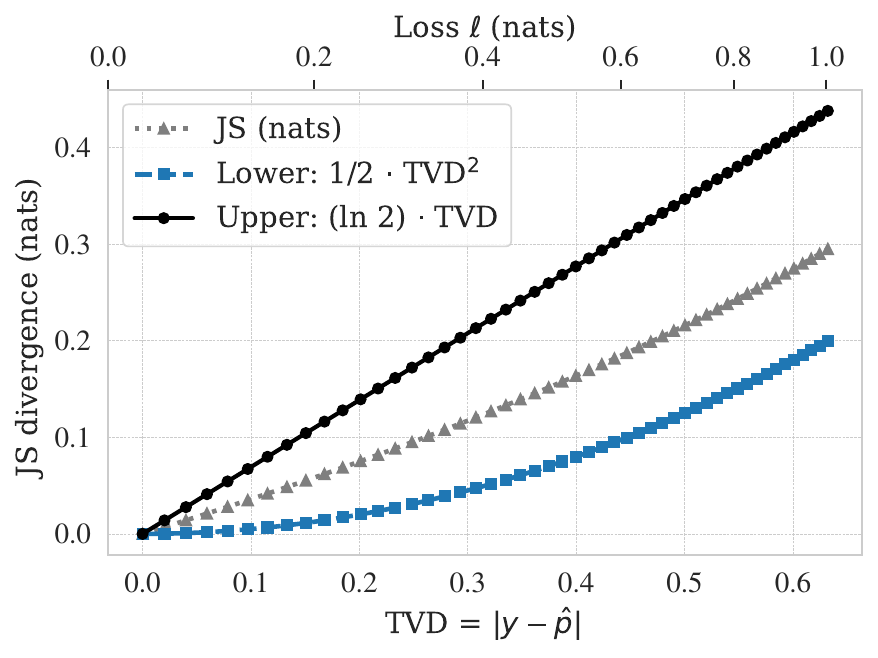}
    \caption{JS vs.\ TV with bounds and secondary loss axis. 
    Empirical $\mathrm{JS}(P_{\hat\theta}\,\|\,P_{\theta})$ lies within the two-sided bounds 
    $\tfrac{1}{2}\mathrm{TV}^2 \le \mathrm{JS} \le (\ln 2)\mathrm{TV}$ (nats). 
    The top axis maps TV to logistic loss via $\ell=-\ln(1-\mathrm{TV})$, 
    tying divergence trends directly to classification confidence.}
    \label{fig:js_vs_tvd_envelope}
  \end{subfigure}
  \hspace*{\fill}
  \caption{Distributional divergence analysis.
  (a) shows reconstructed vs.\ true scaled beta densities sorted by divergence; 
  (b) plots Jensen-Shannon divergence against total variation with theoretical bounds 
  and a mapped loss axis. Together these illustrate how reconstruction fidelity relates 
  to classification-relevant divergence scales.}
  \label{fig:divergence_analysis_pair}
  \vskip -0.15in
\end{figure}

\section{Implicit regularization via zero-variance features}
\label{sec:regularization}

Regularization improves generalization by limiting overfitting to noisy or irrelevant features. In Random Forests, bagging \citep{breiman1996bagging} and random subspace sampling \citep{ho1998rsm} yield implicit regularization and robustness. Bootstrapped trees reduce variance, and random feature choice decorrelates trees and controls complexity. \citep{breiman2001random} decomposed generalization error into tree strength and inter-tree correlation, providing a theoretical basis.

\begin{equation}
\mathbb{E}[(f(x) - \mathbb{E}[y \mid x])^2] 
= \mathrm{Var}(f(x)) + \mathrm{Bias}^2(f(x)) + \sigma^2,
\end{equation}

where $\mathbb{E}[y \mid x]$ is the true conditional class probability, $\mathrm{Var}(f(x))$ the model variance, $\mathrm{Bias}^2(f(x))$ the squared bias, and $\sigma^2$ irreducible noise. ``Extremely Randomized Trees'' increase decorrelation and lower variance \citep{geurts2006extremely}, while stability results \citep{bousquet2002stability} support varied ensembles. We study another implicit mechanism by adding zero-variance (constant) features to reshape feature selection and decision boundaries. In ticket pricing, this serves as a lever to tune trust in recovered distributions, since adjustments to the Random Forest ensemble directly modulate the fidelity of the inferred parameters.

\subsection{Analysis of probabilistic feature selection}

\topicstart{Notation:} For notational clarity in what follows, 
we let $ m = \texttt{max\_features} $ denote the number of features 
randomly selected at each split in the Random Forest, following the 
scikit-learn \citep{scikit-learn} implementation.

In a standard Random Forest construction with fixed-size feature selection, exactly $m$
out of the $n$ total features are chosen at each split node, uniformly over all
$\binom{n}{m}$ subsets. Hence, the probability that a particular feature $X_j$ is included
in the candidate set at any node is
\begin{equation}
P(\text{Include } X_j) 
\;=\; 
\frac{m}{n}.
\end{equation}
Across $B$ trees, each containing an average of $L$ split nodes, the expected total number
of times $X_j$ appears in candidate sets is then
\begin{equation}
\mathbb{E}[\mathrm{Count}_{\text{in-candidate}}(X_j)]
\;=\;
B \,\cdot\, L \,\cdot\, \frac{m}{n}.
\end{equation}

\topicstart{Feature selection via gini impurity reduction:} For binary
classification, the Gini impurity is
\begin{equation}
G
\;=\;
2\,p\,(1-p),
\end{equation}
where $p$ is the proportion of one class. Splitting on $X_j$ changes this impurity,
reducing it by $\Delta G(X_j)$. We define the rank or score of feature
$X_j$ as
\begin{equation}
r(X_j)
\;=\;
\Delta G(X_j).
\end{equation}
A higher $r(X_j)$ means a larger impurity reduction and thus a higher rank among the
available features at a node. Moderately predictive parameters, such as $\alpha_i$ and
$\beta_i$ in the artist classification use-case, can still achieve some positive $r(X_j)$,
even if not as large as top-ranked features.

\topicstart{Competitive advantage of highly ranked features:} Although each feature
$X_j$ has a nominal $\tfrac{m}{n}$ chance of appearing in the size-$m$ candidate set
at a node, the final split is awarded to whichever feature yields the greatest score. If we
assume a proportional ``weighted by $r(X_j)$'' selection among the $m$ chosen, then for a
subset
\begin{equation}
S
\;\subseteq\;
\{1,\dots,n\},
\quad
|S|
\;=\;
m,
\end{equation}
we have
\begin{equation}
P(S)
\;=\;
\frac{1}{
  \binom{n}{m}
}.
\end{equation}
Conditioned on $S$, the probability that $X_j$ wins the split is
\begin{equation}
P(X_j \mid S)
\;=\;
\frac{
  r(X_j)
}{
  \sum_{k \in S} r(X_k)
}.
\end{equation}
Hence, the unconditional probability of $X_j$ being chosen for a split is
\begin{equation}
P(X_j)
\;=\;
\sum_{\substack{S:\,j\in S}}
P(S)\,\cdot\, P(X_j \mid S),
\end{equation}
which expands to
\begin{equation}
P(X_j)
\;=\;
\frac{1}{
  \binom{n}{m}
}
\;\sum_{\substack{S:\,j\in S}}
\frac{
  r(X_j)
}{
  \sum_{k \in S} r(X_k)
}.
\end{equation}

\topicstart{Closed-form approximation:} When $n \gg m$, or simply for conceptual
ease, one can approximate $\sum_{k \in S} r(X_k)$ by its expectation,
\begin{equation}
r(X_j)
\;+\;
(m-1)\,\mathbb{E}[\,r(X_k)\bigr],
\end{equation}
yielding
\begin{equation}
P(X_j)
\;\approx\;
\end{equation}
\begin{equation}
\frac{1}{
  \binom{n}{m}
}
\;\sum_{\substack{S:\,j\in S}}
\frac{
  r(X_j)
}{
  r(X_j)
  + (m-1)\,\mathbb{E}[\,r(X_k)\bigr].
}
\end{equation}
Since there are $\binom{n-1}{m-1}$ subsets that include $X_j$, and
$\tfrac{\binom{n-1}{m-1}}{\binom{n}{m}} = \tfrac{m}{n},$ we obtain
\begin{equation}
P(X_j)
\;\approx\;
\frac{m}{n}
\,\cdot\,
\frac{
  r(X_j)
}{
  r(X_j)
  + (m-1)\,\mathbb{E}[\,r(X_k)\bigr].
}
\end{equation}
Thus, even though every feature has the same nominal $\tfrac{m}{n}$ chance of
entering the candidate set, those with consistently higher $r(X_j)$ can dominate,
overshadowing less ranked predictors.

\topicstart{Probabilistic effects of zero-variance variables:} Earlier (Section \ref{sec:preliminaries}), the datasets $\mathcal{D}_{\alpha\beta}$ and $\mathcal{D}_{\delta}$ were extended to $\mathcal{D}_{\alpha\beta}^{(\text{reg})}$ and $\mathcal{D}_{\delta}^{(\text{reg})}$ by adding zero-variance (constant-value) features. The count of such features is $n_{\mathrm{ZV}}$. They have near-zero Gini scores because they yield no impurity reduction. Although seemingly unhelpful, these variables shift the Random Forest selection dynamics by adding low-score competition that tempers dominance of top-ranked features. Let $n$ denote the non-constant features (e.g., $\mu_i, \tilde{\mu}_i, \text{Max}_i, \text{Min}_i, \alpha_i, \beta_i$) and $n_{\mathrm{ZV}}$ the zero-variance ones, so the feature set has $n + n_{\mathrm{ZV}}$ elements. If each zero-variance feature has rank score $r_{\mathrm{ZV}}\approx 0$, it dilutes the score sum in the denominator and boosts the selection probability of mid-ranked features compared to the case with no zero-variance features.

\begin{theorem}[Zero-Variance Dilution Effect]
\label{thm:zero_variance_dilution}
Suppose $n_{\mathrm{ZV}}$ zero-variance features with $r_{\mathrm{ZV}}\!\approx\!0$ are
added, enlarging the feature set from $n$ to $n_{\mathrm{eff}} = n+n_{\mathrm{ZV}}$.
Let
\begin{equation}
\overline{r}_{\mathrm{eff}}
=\frac{\textstyle \sum_{j=1}^{n} r(X_j) + n_{\mathrm{ZV}}\,r_{\mathrm{ZV}}}
       {n_{\mathrm{eff}}}
\;\approx\;
\frac{n\,\overline{r}}{n+n_{\mathrm{ZV}}}
\end{equation}
\begin{equation}
\text{so that}\quad\overline{r}_{\mathrm{eff}} < \overline{r}.
\end{equation}
For any two informative features $X_h,X_\ell$ with scores
$a=r(X_h) > b=r(X_\ell) > r_{\mathrm{ZV}}$, the closed-form odds ratio
between their selection probabilities satisfies
\begin{equation}
\frac{P_h^{(\mathrm{eff})}(m)}{P_\ell^{(\mathrm{eff})}(m)}
\;<\;
\frac{P_h(m)}{P_\ell(m)},
\end{equation}
where
\begin{equation}
P_j(m)           = \frac{m}{n}\,
                    \frac{r(X_j)}{\,r(X_j)+(m-1)\,\overline{r}},
\qquad
\end{equation}
\begin{equation}
P_j^{(\mathrm{eff})}(m)
                  = \frac{m}{n_{\mathrm{eff}}}\,
                    \frac{r(X_j)}{\,r(X_j)+(m-1)\,\overline{r}_{\mathrm{eff}}}.
\end{equation}

Thus adding zero-variance features compresses the relative dominance of
higher-scoring over lower-scoring variables, giving mid-ranked features more
splitting opportunities.
\end{theorem}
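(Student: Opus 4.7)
The plan is to reduce the odds-ratio comparison to a simple monotonicity argument on a one-variable rational function. Writing $a=r(X_h)$, $b=r(X_\ell)$, and $c=m-1$, the unregularized odds ratio is
\begin{equation}
\frac{P_h(m)}{P_\ell(m)}
\;=\;
\frac{(m/n)\,a/(a+c\,\overline{r})}{(m/n)\,b/(b+c\,\overline{r})}
\;=\;
\frac{a}{b}\,\cdot\,\frac{b+c\,\overline{r}}{a+c\,\overline{r}},
\end{equation}
and the regularized version is identical except with $\overline{r}$ replaced by $\overline{r}_{\mathrm{eff}}$. The crucial observation is that the prefactor $m/n$ versus $m/n_{\mathrm{eff}}$ cancels exactly in the ratio, so the claim reduces entirely to comparing $g(\overline{r}_{\mathrm{eff}})$ with $g(\overline{r})$, where $g(x)=(b+cx)/(a+cx)$.

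\textbf{Key Steps.} First, I would set up the ratio as above and explicitly verify the cancellation of the $m/n$ factors, which is what turns this into a clean statement about $g$. Second, I would compute
\begin{equation}
g'(x)\;=\;\frac{c(a+cx)-c(b+cx)}{(a+cx)^2}\;=\;\frac{c\,(a-b)}{(a+cx)^2}\;>\;0,
\end{equation}
which is strictly positive since $c=m-1\ge 1$ and $a>b$ by hypothesis. Thus $g$ is strictly increasing on $[0,\infty)$. Third, I would invoke the hypothesis $r_{\mathrm{ZV}}\approx 0$ together with the defining formula for $\overline{r}_{\mathrm{eff}}$ to conclude $\overline{r}_{\mathrm{eff}}=\tfrac{n\,\overline{r}}{n+n_{\mathrm{ZV}}}<\overline{r}$ whenever $n_{\mathrm{ZV}}\ge 1$. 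Applying monotonicity of $g$ then yields
\begin{equation}
\frac{b+c\,\overline{r}_{\mathrm{eff}}}{a+c\,\overline{r}_{\mathrm{eff}}}
\;<\;
\frac{b+c\,\overline{r}}{a+c\,\overline{r}},
\end{equation}
and multiplying both sides by $a/b$ gives the theorem.

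\textbf{Main Obstacle.} The algebra itself is routine; the only subtle point is conceptual, namely confirming that the $m/n$ prefactor cancels cleanly so that the added zero-variance features do \emph{not} hurt the relative odds through the inclusion-probability factor, and then pinning down the correct sign for $g'$. A minor side issue is the approximation $r_{\mathrm{ZV}}\approx 0$: strictly, one should treat $r_{\mathrm{ZV}}=0$ to get the stated $\overline{r}_{\mathrm{eff}}$ formula, then note by continuity that the strict inequality persists for any sufficiently small but positive $r_{\mathrm{ZV}}<b$, since the strict monotonicity of $g$ is preserved on the compact interval $[0,\overline{r}]$. I would state this as a short remark so that the theorem matches the approximation used throughout the paper without ambiguity.
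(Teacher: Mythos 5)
Your proof is correct and is essentially the same argument as the paper's: both cancel the \(m/n\) versus \(m/n_{\mathrm{eff}}\) prefactors, reduce the claim to the monotonicity of the rational function \((b+K)/(a+K)\) in \(K=(m-1)\overline{r}\) via a positive derivative, and conclude from \(\overline{r}_{\mathrm{eff}}<\overline{r}\). Your closing remark on treating \(r_{\mathrm{ZV}}=0\) exactly and appealing to continuity (and the implicit need for \(m\ge 2\) for strictness) is a small but reasonable refinement of the paper's presentation.
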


\begin{proof}
Write $K=(m-1)\,\overline{r}$ and $K_t=(m-1)\,\overline{r}_{\mathrm{eff}}$
with $K_t<K$.
The prefactors $\tfrac{m}{n}$ and $\tfrac{m}{n_{\mathrm{eff}}}$
cancel in the ratio, giving
\begin{equation}
\frac{P_h^{(\mathrm{eff})}(m)}{P_\ell^{(\mathrm{eff})}(m)}
      =\frac{a}{b}\,
       \frac{\,b+K_t\,}{\,a+K_t\,},
\quad
\frac{P_h(m)}{P_\ell(m)}
      =\frac{a}{b}\,
       \frac{\,b+K\,}{\,a+K\,}.
\end{equation}
Define
\begin{equation}
R(K)=\dfrac{a}{b}\dfrac{\,b+K\,}{\,a+K\,}.
\end{equation}
A direct derivative gives
\begin{equation}
\dfrac{dR}{dK}= \dfrac{a(a-b)}{b\,(a+K)^2}>0
\end{equation}
because $a>b>0$; hence $R(K)$ is strictly increasing in $K$.
Since $K_t<K$, we have 
\begin{equation}
R(K_t)<R(K),
\end{equation}
establishing the claimed inequality.
\end{proof}

\citep{geurts2006extremely} demonstrated that increasing the randomization of split selection in Extremely  
Randomized Trees leads to deeper decision trees by weakening the dependence of split choices on the  
target variable. This increased depth arises because random splits reduce the impurity reduction at each  
node, thus requiring additional splits to achieve sufficient purity. Formally, this can be expressed as:
$
\mathbb{E}[d_{\text{random}}] > \mathbb{E}[d_{\text{optimal}}]
$, 
where $\mathbb{E}[d_{\text{random}}]$ and $\mathbb{E}[d_{\text{optimal}}]$ represent the expected tree depths for randomized  
and optimal splits, respectively. Our approach and experiments reveal comparable  
effects, with zero-variance features increasing tree depth and encouraging more variation among splits.

\begin{corollary}[Increased Expected Tree Depth]
\label{corollary:increased_tree_depth}
Consider a Random Forest whose effective feature set is
$
n_{\mathrm{eff}} = n + n_{\mathrm{ZV}},
$
with $n$ informative features ($r(X_j)>0$) and $n_{\mathrm{ZV}}$ zero-variance
features ($r_{\mathrm{ZV}}\!\approx\!0$).
Let $d$ denote the depth of a decision tree grown under a fixed impurity-based
stopping rule.  Then, holding all other training hyperparameters constant,
\begin{equation}
\mathbb{E}\!\bigl[d(n_{\mathrm{eff}})\bigr] \;>\;
\mathbb{E}\!\bigl[d(n)\bigr].
\end{equation}
\end{corollary}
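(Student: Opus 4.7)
The plan is to establish the depth inequality by showing that the presence of zero-variance features strictly lowers the expected impurity reduction obtained at each split, so that under a fixed impurity-based stopping rule additional splits (and therefore additional depth) are required to drive node impurity below the termination threshold. I would first pin down the stopping rule implicit in the statement, for instance terminating a node as soon as its Gini impurity falls below some $\varepsilon$ or when no candidate split exceeds a prescribed minimum reduction; this lets me write the depth of any root-to-leaf path as a stopping time $\tau = \inf\{t : G_t \le \varepsilon\}$, where $G_t$ is the node impurity after $t$ splits along that path.

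Next, I would control the per-split impurity reduction in the two regimes. At each node, the splitter draws a size-$m$ candidate subset $S$ and awards the split to $\arg\max_{j \in S} r(X_j)$, with realized reduction $\Delta G_{\max}(S)$. With only the $n$ informative features available, $|S \cap \{\text{informative}\}| = m$ almost surely, whereas with $n_{\mathrm{ZV}}$ constants added, $|S \cap \{\text{informative}\}|$ follows a hypergeometric distribution that is stochastically decreasing in $n_{\mathrm{ZV}}$. Combined with the dilution inequality in Theorem~\ref{thm:zero_variance_dilution}, which compresses the relative weight of high-score features even conditional on an informative subset being drawn, this yields $\mathbb{E}[\Delta G_{\max}(S) \mid n_{\mathrm{eff}}] < \mathbb{E}[\Delta G_{\max}(S) \mid n]$, with a strict gap driven in part by the positive-probability event that $S$ consists entirely (or almost entirely) of zero-variance features.

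The third step translates the per-split reduction inequality into a depth inequality via a coupling argument. I would grow two trees on the same bootstrapped sample and the same stream of random subspace draws, constructing the augmented tree's candidate set from the full $n_{\mathrm{eff}}$ pool while the baseline tree's candidate set is drawn from the restriction to informative coordinates. Under this coupling, every candidate set in the augmented tree is weakly worse than its baseline counterpart, so the impurity path $G_t^{(\mathrm{eff})}$ dominates $G_t$ pointwise along coupled trajectories; hence $\tau(n_{\mathrm{eff}}) \ge \tau(n)$ almost surely, with strict inequality on the positive-probability event that a candidate set is dominated by zero-variance scores. Taking expectations and averaging over the bootstrap yields $\mathbb{E}[d(n_{\mathrm{eff}})] > \mathbb{E}[d(n)]$.

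The main obstacle will be this final coupling, since decision-tree depth is not a monotone functional of a single scalar summary of per-split quality: subsample sizes shift at each descent, stopping can be triggered by node cardinality rather than impurity, and reductions are correlated across sibling nodes. The technical work lies in constructing a shared-randomness coupling that is faithful to the \texttt{scikit-learn} splitter (uniform subset draws, tie-breaking, and honest Gini computations) and in verifying that the dilution effect from Theorem~\ref{thm:zero_variance_dilution} transfers from feature-selection probabilities to order-statistic comparisons of $\Delta G_{\max}$; once that is in place, the stopping-time domination and the expectation bound follow routinely.
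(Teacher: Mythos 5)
Your core chain of reasoning is the same as the paper's: by the dilution result (Theorem~\ref{thm:zero_variance_dilution}) the per-split expected impurity reduction drops once zero-variance features are present, so under a fixed impurity-based stopping rule the threshold is reached later and trees grow deeper. The paper's proof stops exactly there, at the qualitative level, arguing only that mid-ranked features win more splits, that their impurity drops are smaller, and hence that more levels are needed. You go beyond it in two ways. First, you add a mechanism the paper's corollary proof does not invoke: the hypergeometric thinning of the candidate subset itself (including the positive-probability event that a size-\(m\) draw is mostly or entirely zero-variance), which directly lowers \(\mathbb{E}[\Delta G_{\max}(S)]\) and is arguably a cleaner driver of the effect than the odds compression alone. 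Second, you attempt to make the depth conclusion rigorous via a stopping time \(\tau=\inf\{t: G_t\le\varepsilon\}\) and a shared-randomness coupling. That second step is where your plan is not yet a proof, and you correctly flag it: once the two coupled trees select different splits, the populations reaching their descendant nodes differ, so the impurity trajectories are no longer pointwise comparable and the claimed almost-sure domination \(\tau(n_{\mathrm{eff}})\ge\tau(n)\) does not follow from per-node comparisons of candidate quality; additional structure (or a weaker, in-expectation argument) would be needed. To be clear, the paper does not close this gap either; it simply asserts the passage from ``lower per-split reduction'' to ``greater expected depth.'' So your proposal matches the paper's argument in substance, is somewhat stronger in the mechanisms it identifies, and is honest that the one step promising genuine rigor remains open.
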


\begin{proof}
Theorem~\ref{thm:zero_variance_dilution} shows that adding zero-variance features
compresses the odds of high- versus mid-ranked variables:
\begin{equation}
\frac{P_h^{(\mathrm{eff})}(m)}{P_\ell^{(\mathrm{eff})}(m)}
\;<\;
\frac{P_h(m)}{P_\ell(m)} .
\end{equation}
Consequently, top-scoring features win fewer splits relative to before, and
more mid-ranked features are selected.  
Because those mid-ranked features achieve smaller impurity reductions
($r_\ell<b<r_h$), the expected impurity drop per internal node is lower.
A lower per-split reduction means the chosen impurity threshold is reached
later in the recursive partitioning process, so additional levels are needed
before termination.  Hence the expected depth increases:
$
\mathbb{E}[d(n_{\mathrm{eff}})]>\mathbb{E}[d(n)].
$
\end{proof}

Increased randomness in split selection  
explicitly reduces the correlation among trees \citep{geurts2006extremely}, expressed mathematically as:
\begin{equation}
\sigma^2 = \rho\,\frac{\mathrm{Var}(h(x))}{B},
\end{equation}
where lower correlation $\rho$ directly reduces ensemble variance $\sigma^2$.
Our theoretical analysis and empirical results confirm this assertion.

\begin{corollary}[Reduced Ensemble Correlation]
\label{corollary:reduced_ensemble_correlation}
Let $\rho$ denote the pair-wise correlation between base learners in a
Random Forest.  Adding $n_{\mathrm{ZV}}$ zero-variance features lowers that
correlation and hence the variance term
$\rho\,\mathrm{Var}(h)/B$
in Breiman’s bias–variance decomposition.
\end{corollary}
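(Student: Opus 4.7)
The plan is to reduce the statement to a per-node feature-selection diversification argument and then propagate that diversification to the joint distribution of two trees' predictions. Throughout, I rely on Theorem~\ref{thm:zero_variance_dilution} (dominance compression) and Corollary~\ref{corollary:increased_tree_depth} (greater depth), which already supply the required structural facts about how zero-variance padding reshapes split selection.

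First, I would encode each tree $h_b$ by its split-feature usage vector $\mathbf{u}_b\in\mathbb{R}^{n_{\mathrm{eff}}}$, whose $j$-th coordinate is the fraction of internal nodes splitting on $X_j$. In expectation over bootstrapping and the random subset mechanism, this vector converges to the selection distribution $P^{(\mathrm{eff})}$ of Theorem~\ref{thm:zero_variance_dilution}. Because every odds ratio $P_h^{(\mathrm{eff})}/P_\ell^{(\mathrm{eff})}$ strictly contracts relative to $P_h/P_\ell$, the restriction of $P^{(\mathrm{eff})}$ to informative features is majorized by $P$ in the Hardy--Littlewood--Pólya sense, i.e.\ $P^{(\mathrm{eff})}\prec P$.

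Next, I would relate pair-wise tree correlation to a collision probability on splits. For two independently grown trees drawing the winning feature at any given node from the same distribution $P$, the probability they select the \emph{same} feature is $\sum_j P_j^2$. By Schur-convexity of $p\mapsto \sum_j p_j^2$, the majorization $P^{(\mathrm{eff})}\prec P$ yields
\[
\sum_j \bigl(P_j^{(\mathrm{eff})}\bigr)^2 \;\le\; \sum_j P_j^2,
\]
so per-node agreement strictly decreases. Combined with Corollary~\ref{corollary:increased_tree_depth} (more internal nodes, each less likely to coincide), the probability that two trees traverse a common path for a test point $x$ contracts multiplicatively along the path.

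Finally, I would translate split-path divergence into prediction-space decorrelation via Breiman's decomposition $\rho = \mathrm{Cov}(h_a(X),h_b(X))/\mathrm{Var}(h(X))$, bounding the covariance by the probability that both trees share a leaf, which itself factorizes over per-node agreement events. Plugging the resulting inequality $\rho^{(\mathrm{eff})}<\rho$ into the Breiman identity $\sigma^2 = \rho\,\mathrm{Var}(h)/B$ yields the claim. The main obstacle will be the last bridge: per-node feature-agreement does not in general imply agreement on split \emph{thresholds} or on downstream predictions, so making the path-factorization rigorous requires either a mild conditional-independence condition on the split thresholds or an appeal to the $U$-statistic representation of forest predictions employed by Mentch and Zhou~\cite{mentch2019quantifying}. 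I would proceed under such a regularity assumption, which is standard in existing variance analyses of Random Forests and is consistent with the probabilistic-split model used throughout Section~\ref{sec:regularization}.
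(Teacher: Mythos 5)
Your skeleton is the same one the paper uses: Theorem~\ref{thm:zero_variance_dilution}'s odds compression implies more heterogeneous split choices, heterogeneity implies lower pairwise correlation \(\rho\), and plugging into \(\rho\,\mathrm{Var}(h)/B\) gives the claim. The difference is that the paper's proof stops at the qualitative level (``increased heterogeneity of split choices makes the predictions of individual trees less correlated''), whereas you try to make the two inner links quantitative. Your middle step is a genuine improvement in precision: turning odds-ratio contraction into majorization and then invoking Schur-convexity of \(p\mapsto\sum_j p_j^2\) gives a checkable statement that per-node feature collisions become rarer. Two caveats there: the quantities \(P_j^{(\mathrm{eff})}(m)\) in Theorem~\ref{thm:zero_variance_dilution} are approximations that neither sum to one nor live on a common index set, so you must first restrict to the informative features, normalize, and then use the standard fact that order-preserving pairwise ratio contraction (equivalently, a monotone likelihood ratio) plus equal sums yields majorization; as written, ``odds contraction therefore majorization'' skips that normalization.

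The real gap is your final bridge. Bounding \(\mathrm{Cov}(h_a(X),h_b(X))\) by the probability that the two trees share a leaf (or a split path) is not valid in general: two trees can split on entirely different features and still produce nearly identical partitions and predictions whenever the features are mutually correlated or are redundant predictors of \(y\) --- exactly the situation in this dataset, where \(\mathrm{Min}_i,\mathrm{Max}_i,\mu_i,\tilde{\mu}_i\) are strongly dependent. So low path-collision probability does not force low \(\rho\), and the assumption you would need is considerably stronger than a ``mild conditional-independence condition on thresholds''; it is essentially independence (or near-independence) of the informative features themselves, or the full machinery of a \(U\)-statistic variance analysis in the spirit of Mentch and Zhou~\cite{mentch2019quantifying}. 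The paper avoids this by leaving the heterogeneity-to-decorrelation step informal; your version makes the leap explicit but does not close it, so if you keep the quantitative framing you must either state the feature-independence hypothesis up front or weaken the conclusion to the per-node collision statement that your majorization argument actually proves.
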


\begin{proof}
Without zero-variance features, the highest-ranked variables win a large
fraction of candidate splits; many trees therefore grow similar decision
paths, inflating $\rho$.  
Theorem~\ref{thm:zero_variance_dilution} shows that after augmentation the
odds ratio
$
P_h^{(\mathrm{eff})}(m)/P_\ell^{(\mathrm{eff})}(m)
$
shrinks for every pair of scores $a>b>0$.
Consequently, top-ranked variables win fewer splits relative to mid-ranked
ones, and different features now have a greater chance of initiating branches.
This increased heterogeneity of split choices makes the predictions of
individual trees less correlated, so $\rho$ decreases; the factor
$\rho\,\mathrm{Var}(h)/B$ is therefore reduced.
\end{proof}

\subsection{Expanding the regularization search space}
\label{sec:continuous_search_space}

Prior work by \citep{mentch2019quantifying} shows that tuning the $m$ ($\mathtt{max\_features}$) hyperparameter, the number of features considered at each split, regularizes Random Forests by altering the chance that a feature is selected. Because $m$ is an integer, their scheme moves in discrete steps and yields a finite set of selection probabilities. Our approach adds constant-value features, which changes the total feature count and yields a near continuum of expected feature-selection probabilities. The theorem below formalizes that introducing $n_{\mathrm{ZV}}$ constant features can approximate any target probability in a broad interval by randomized interpolation between adjacent choices of $n_{\mathrm{ZV}}$, thereby ``filling in the gaps'' left by discrete $m$ adjustments.

\begin{theorem}[Continuous Approximation via Zero-Variance Dilution]
\label{thm:continuous_approx}
Let $n$ denote the number of truly informative features, and fix an integer 
$m$ such that $1 \le m \le n$. Let $n_{\mathrm{ZV}}$ be the number of constant (zero-variance) features added.
In the absence of constant features, the effective probability of selecting an informative feature at a split is
\begin{equation}
  \gamma \;=\; \frac{m}{n}.
\end{equation}
If we add $n_{\mathrm{ZV}}\ge 0$ constant (zero-variance) features, then the total number of features 
is $n + n_{\mathrm{ZV}}$, and the effective selection probability of an informative feature becomes
\begin{equation}
  \gamma' \;=\; \frac{m}{n + n_{\mathrm{ZV}}}.
\end{equation}
The set
\begin{equation}
  S_{L} 
  \;=\;
  \Bigl\{ \,\tfrac{m}{\,n + n_{\mathrm{ZV}}} : n_{\mathrm{ZV}} \in \mathbb{N}_0 \Bigr\}
\end{equation}
is a countable, monotone grid spanning $\bigl(0,\tfrac{m}{n}\bigr]$ with $0$ as its only accumulation point.
Moreover, for any desired probability
\begin{equation}
  0 \;<\; \gamma^* \;\le\; \frac{m}{n},
\end{equation}
and any $\epsilon>0$, there exist adjacent integers $k,k\!+\!1$ and a mixing weight $p\in[0,1]$ such that the randomized scheme
that uses $n_{\mathrm{ZV}}=k$ with probability $p$ and $n_{\mathrm{ZV}}=k+1$ with probability $1-p$ achieves
\begin{equation}
  \Bigl|\,\mathbb{E}[\gamma'] \;-\; \gamma^* \Bigr| \;<\; \epsilon,
\end{equation}
where $\mathbb{E}[\gamma']= p\,\tfrac{m}{n+k} + (1-p)\,\tfrac{m}{n+k+1}$. Hence, by randomized interpolation between adjacent grid points,
the expected selection probability can be tuned arbitrarily finely over $\bigl(0,\tfrac{m}{n}\bigr]$.
\end{theorem}

\begin{proof}
\topicstart{Discrete set without constant features:}
Following \citep{mentch2019quantifying}, let $n$ be the number of 
informative features and let $m$ be the chosen subset size at each split.
The probability that any one informative feature appears in a candidate set is then
$\gamma = \tfrac{m}{n}$. Because $m$ must be an integer with $1 \le m \le n$, 
the set of possible probabilities (as $m$ varies) is 
\begin{equation}
  S_{MZ} 
  \;=\;
  \Bigl\{
     \tfrac{1}{n}, \tfrac{2}{n}, \dots, \tfrac{n}{n}
  \Bigr\}.
\end{equation}
This set is finite and discrete.

\medskip
\topicstart{Zero-variance dilution and the grid $S_L$:}
Fix $m$. Instead of varying $m$ itself, we add $n_{\mathrm{ZV}}$ constant 
(zero-variance) features to the existing $n$ informative ones, so the total 
feature count is $n + n_{\mathrm{ZV}}$. As a result, the effective probability of picking an 
informative feature becomes
\[
  \gamma' 
  \;=\; 
  \frac{m}{n + n_{\mathrm{ZV}}}.
\]
Hence, each nonnegative integer $n_{\mathrm{ZV}}$ in $\{0,1,2,\dots\}$ produces one element 
of the set
\[
  S_{L} 
  \;=\;
  \Bigl\{ \tfrac{m}{n + n_{\mathrm{ZV}}} : n_{\mathrm{ZV}} \in \mathbb{N}_0 \Bigr\}.
\]
Because $n_{\mathrm{ZV}}$ can grow arbitrarily large, the values of $\gamma'$ can get 
arbitrarily close to $0$. Also, when $n_{\mathrm{ZV}}=0$, $\gamma'=\frac{m}{n}=\gamma$. 
Thus, $S_{L}$ spans probabilities in $(0, \frac{m}{n}]$ and forms a countable, monotone grid with $0$ as its only accumulation point.

\medskip
\topicstart{Randomized interpolation to approximate any target $\gamma^*$:}
Take any target probability $\gamma^*$ satisfying $0 < \gamma^* \le \frac{m}{n}$ and any $\epsilon>0$.
Choose $k = \bigl\lfloor \tfrac{m}{\gamma^*} - n \bigr\rfloor \vee 0$. Then $\tfrac{m}{n+k} \ge \gamma^* \ge \tfrac{m}{n+k+1}$, so $\gamma^*$ lies between the adjacent grid points.
Define
\begin{equation}
  p \;=\; \frac{\gamma^* - \tfrac{m}{n+k+1}}{\tfrac{m}{n+k} - \tfrac{m}{n+k+1}} \;\in\; [0,1].
\end{equation}
If we select $n_{\mathrm{ZV}}=k$ with probability $p$ and $n_{\mathrm{ZV}}=k+1$ with probability $1-p$, then
\begin{equation}
  \mathbb{E}[\gamma'] \;=\; p\,\tfrac{m}{n+k} + (1-p)\,\tfrac{m}{n+k+1} \;=\; \gamma^*,
\end{equation}
and thus $\bigl|\mathbb{E}[\gamma']-\gamma^*\bigr|=0<\epsilon$. When $\gamma^*$ is very small, increasing $k$ makes the adjacent grid spacing arbitrarily fine; the same interpolation then achieves any prescribed $\epsilon>0$. This establishes the claimed approximation.
\end{proof}

\begin{corollary}[Continuous Accuracy Expansion via Selection Probability] 
\label{corollary:continuous_accuracy}
  Let 
  \[\gamma' = \frac{m}{n+n_{\mathrm{ZV}}}\] 
  be the effective probability of selecting an
  informative feature when the original $n$ features are augmented with $n_{\mathrm{ZV}}$
  constant (zero-variance) features. Assume that the mapping from $\gamma'$ to
  the classifier's accuracy $\nu$ is continuous, and let
  $(\nu_{\min}, \nu_{\max}]$ denote the interval of achievable accuracy
  values under the original discrete scheme. Then, for any target accuracy
  $\nu^*$ satisfying 
  \begin{equation}
  \nu_{\min} < \nu^* \le \nu_{\max},
  \end{equation} 
  and for any $\epsilon > 0$, there exist adjacent integers $k,k\!+\!1$ and a mixing weight $p\in[0,1]$
  such that the randomized scheme using $n_{\mathrm{ZV}}\in\{k,k\!+\!1\}$ with probabilities $(p,1-p)$ yields
  an expected accuracy $\mathbb{E}[\nu']$ (as a function of $\mathbb{E}[\gamma']$) satisfying
  \begin{equation}
  \bigl|\mathbb{E}[\nu'] - \nu^*\bigr| < \epsilon.
  \end{equation} 
  In other words, the set of expected achievable
  accuracies is dense in $(\nu_{\min}, \nu_{\max}]$, providing near-continuous
  control over the model's performance by fine-tuning the effective selection
  probability.
\end{corollary}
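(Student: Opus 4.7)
The plan is to combine the density result from Theorem~\ref{thm:continuous_approx} with the assumed continuity of the map $\gamma' \mapsto \nu$, producing the required approximation by a standard $\varepsilon$--$\delta$ argument. First, I would fix any target accuracy $\nu^*$ in $(\nu_{\min}, \nu_{\max}]$ and invoke the Intermediate Value Theorem: because $\nu$ is continuous on the connected interval $(0, m/n]$ and its image spans $(\nu_{\min}, \nu_{\max}]$, there exists some $\gamma^* \in (0, m/n]$ with $\nu(\gamma^*) = \nu^*$. This reduces the problem to approximating $\gamma^*$ within the selection-probability space.

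Next, for an arbitrary $\varepsilon > 0$, I would use continuity to produce a $\delta > 0$ such that every $\gamma' \in (0, m/n]$ satisfying $|\gamma' - \gamma^*| < \delta$ also satisfies $|\nu(\gamma') - \nu^*| < \varepsilon$. Then Theorem~\ref{thm:continuous_approx} guarantees that the set $S_{L} = \{m/(n+n_{\mathrm{ZV}}) : n_{\mathrm{ZV}} \in \mathbb{N}_0\}$ is dense in $(0, m/n]$, so I can select a nonnegative integer $n_{\mathrm{ZV}}$ for which the induced probability $\gamma' = m/(n+n_{\mathrm{ZV}})$ lies within $\delta$ of $\gamma^*$. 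Applying the continuity bound to this $\gamma'$ yields $|\nu' - \nu^*| < \varepsilon$, establishing density of achievable accuracies in $(\nu_{\min}, \nu_{\max}]$. Because $n_{\mathrm{ZV}} = 0$ is itself permitted, the endpoint $\nu_{\max}$ is attained, matching the half-open interval in the statement.

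The hard part is not the $\varepsilon$--$\delta$ bookkeeping but justifying the continuity hypothesis on which the argument rests. In practice $\nu$ is estimated from a finite Random Forest and varies with random seed, bootstrap draws, and bagging randomness, so the pointwise map $\gamma' \mapsto \nu$ is at best almost-everywhere continuous. A fully rigorous treatment would require either (i) an expectation-over-ensembles formulation in which the \emph{expected} accuracy depends continuously on feature-selection probabilities through the smoothness of the Gini-impurity scoring used in Theorem~\ref{thm:zero_variance_dilution}, or (ii) a monotonicity argument showing that incremental changes in $n_{\mathrm{ZV}}$ induce proportionally small changes in the expected tree ensemble and therefore in classification error. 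The IVT step also quietly requires the image of $(0, m/n]$ under $\nu$ to be exactly $(\nu_{\min}, \nu_{\max}]$; I would treat this surjectivity as implicit in the corollary's framing rather than attempt to prove it from the Random Forest dynamics.
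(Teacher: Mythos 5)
Your proposal follows essentially the same route as the paper: it combines the density of $S_{L}=\{m/(n+n_{\mathrm{ZV}}):n_{\mathrm{ZV}}\in\mathbb{N}_0\}$ from Theorem~\ref{thm:continuous_approx} with the assumed continuity of $\gamma'\mapsto\nu$ in a standard $\epsilon$--$\delta$ argument. If anything you are more explicit than the paper, which silently assumes the existence of a preimage $\gamma^*$ with $\nu(\gamma^*)=\nu^*$ (your IVT/surjectivity step) and correctly flags that this, like the continuity of $\nu$ itself, is a modeling hypothesis rather than something derivable from the Random Forest dynamics.
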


Adjusting $m$ influences a Random Forest's effective complexity \citep{mentch2019quantifying}, yet the integer nature of $m$ limits how finely one can tune selection probabilities. Expanding the feature set from $n$ to $n+n_{\mathrm{ZV}}$ with zero-variance features yields $\gamma'$ values for any integer $n_{\mathrm{ZV}} \ge 0$. These values form a monotone grid on $\bigl(0,\,\tfrac{m}{n}\bigr]$ and can be randomly interpolated to match any target in expectation, giving near-continuous control over the effective regularization level. This can mimic or surpass the effect of a small $m$ while refining probabilities beyond what integer steps allow. Related continuous approximations from discrete spaces appear in hyperparameter tuning, where refined grids approximate continuous optimization \citep{cironis2022automatic}, broadening the applicability of discrete-choice methods in machine learning.

\subsection{Connections to penalty methods, functional data analysis, and related areas}
Our approach connects fundamentally to classic regularization methods, such as ridge regression
\citep{hoerl1970ridge, tikhonov1943stability}, where explicit quadratic penalties emerge naturally
from Gaussian priors:
\begin{equation}
x_{MAP} = \arg \min_x \{\|Ax - b\|^2 + \lambda \|x\|^2\}.
\label{eq:ridge_regression}
\end{equation}

Extending beyond foundational work, the feature probability reweighting structurally parallels
recent advancements in regularization across various domains. For instance, in functional data
analysis (FDA), recent methods such as roughness penalization in free-knot spline estimation
\citep{magistris2024} redistribute information to avoid over-concentration on specific knots,
maintaining balanced representations. Similarly, our implicit regularization dynamically adjusts
feature selection probabilities, preventing dominance by specific features:
\begin{equation}
p'_i = \frac{p_i}{1 + \lambda \sum_{j} p_j},
\label{eq:feature_reweighting}
\end{equation}
This formulation resembles penalized optimization used in FDA:
\begin{equation}
C = \arg \min_C \|Y - \Phi^T C\|^2 + \lambda C^T R C,
\label{eq:FDA_penalty}
\end{equation}
which explicitly penalizes abrupt variations to enforce smoothness.

Furthermore, our implicit feature-selection regularization is related to another penalty-based approach, the inverse-problem
hyperparameter optimization framework introduced by \citep{dunbar2025hyperparameter},
whose formulation includes a log-determinant regularization:
\begin{equation}
L_M^{(EKI)}(u) = \|\Gamma(u)^{-1/2}(z - G(u))\|^2 - 2 \log P(u).
\label{eq:dunbar_eki}
\end{equation}

The interplay between implicit and explicit regularization frameworks presents an intriguing intersection of 
theoretical and applied perspectives.

\subsection{Analogous effects in ticket pricing and digit classification}

The same phenomena appear in both (i) the ticket pricing dataset with distribution-summary features and (ii) handwritten digit classification. In these settings, highly ranked features (e.g., strong distributional predictors or informative pixel locations) dominate splits and suppress weaker but useful signals. Introducing zero-variance variables reduces this dominance, increasing the selection frequency of subtle features and enriching the model. These variables function as implicit regularization similar to setting $m = 1$: they lower the effective weight of very high-ranked features, raise the probability of choosing secondary ones, and increase splitting variety across the ensemble, which can improve performance.

\subsection{Experimental results}
\label{sec:regularization_experimental_results}

This section applies the same pairwise Random Forest classification methodology from the $\mathcal{D}_{\alpha \beta}$ experiments to test zero-variance features as an implicit regularizer in two domains: a new concert ticket pricing dataset and the UCI handwritten digits dataset \citep{uci_digits}. We compare $\mathcal{D}_{\alpha \beta}$ to $\mathcal{D}_{\alpha \beta}^{\text{(reg)}}$ on 5{,}000 artist-pair models spanning 954 artists, each trained and tested on an 80/20 split with $\approx 40$ training events and $\approx 10$ test events per model. We also examine $\mathcal{D}_{\delta}$ versus $\mathcal{D}_{\delta}^{(\text{reg})}$ on 90 digit-pair classifiers across 10 digits with a similar 80/20 split providing $\approx 287$ training and $\approx 72$ test comparisons per pair. Regularized models use $n_{\mathrm{ZV}}=20$. For $\mathcal{D}_{\delta}$ we select a random subset of $n=6$ representative features for consistency with $\mathcal{D}_{\alpha \beta}$. These complementary experiments show the consistently beneficial effect of implicit regularization via zero-variance features in both real-world secondary ticket pricing and a classic benchmark dataset.

\topicstart{Accuracy and the selection size, $m$:} 
Figs.~\ref{fig:combined_max_features1} and \ref{fig:combined_max_features2} show accuracy trends for the concert pricing and handwritten digit datasets as $m$ varies. Iterating $m$ in a standard Random Forest alters the chance of selecting informative features, yet our experiments find peak accuracy only with implicit regularization. Theorem~\ref{thm:continuous_approx} shows that adding constant (zero-variance) features expands tuning from discrete $m$ steps to a near continuum of selection probabilities, and Corollary~\ref{corollary:continuous_accuracy} confirms that this continuous space enables fine-grained performance adjustments. Selection probabilities reach extremes at the boundaries of $m$ (e.g., $m=1$ yields uniform selection, while $m=n$ favors highly ranked features), and our empirical results show that continuous tuning via implicit regularization adds flexibility. By incorporating constant features, the effective average rank of candidates is diluted, balancing dominant and less prominent predictors and achieving accuracies difficult to reach by iterating $m$ alone.

\begin{figure}[t]
  \centering
  \hspace*{\fill}
  \begin{subfigure}[c]{0.48\linewidth}
    \includegraphics[width=\linewidth]{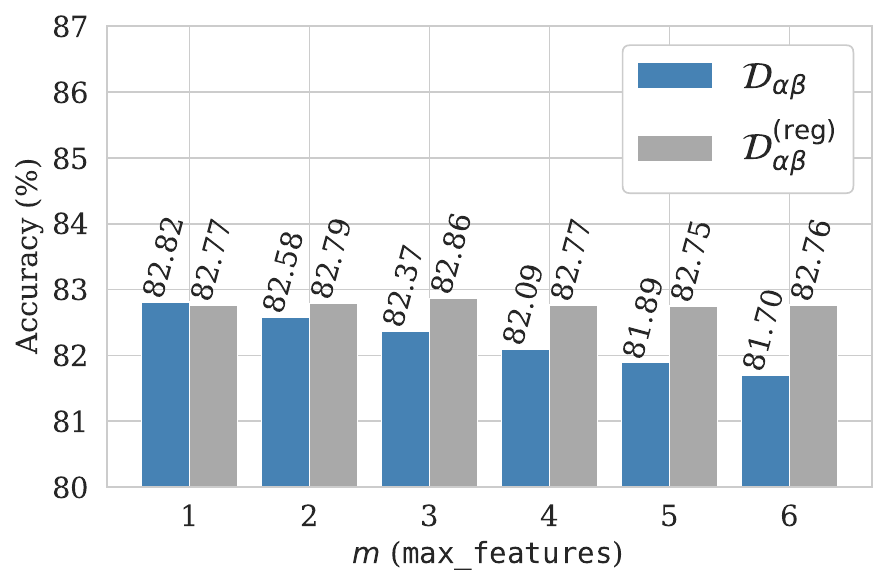}
    \caption{Percent accuracy across feature selection size ($m$) iterations for artist event training datasets 
    $\mathcal{D}_{\alpha \beta}$ and $\mathcal{D}_{\alpha \beta}^{\text{(reg)}}$. The impact of constant features 
    is evident, with $\mathcal{D}_{\alpha \beta}^{\text{(reg)}}$ showing declining accuracy as $m$ increases. The 
    highest accuracy is achieved using zero-variance feature regularization, exceeding what standard hyperparameter tuning alone can reach.}
    \label{fig:combined_max_features1}
  \end{subfigure}
  \hspace*{\fill}
  \begin{subfigure}[c]{0.48\linewidth}
    \includegraphics[width=\linewidth]{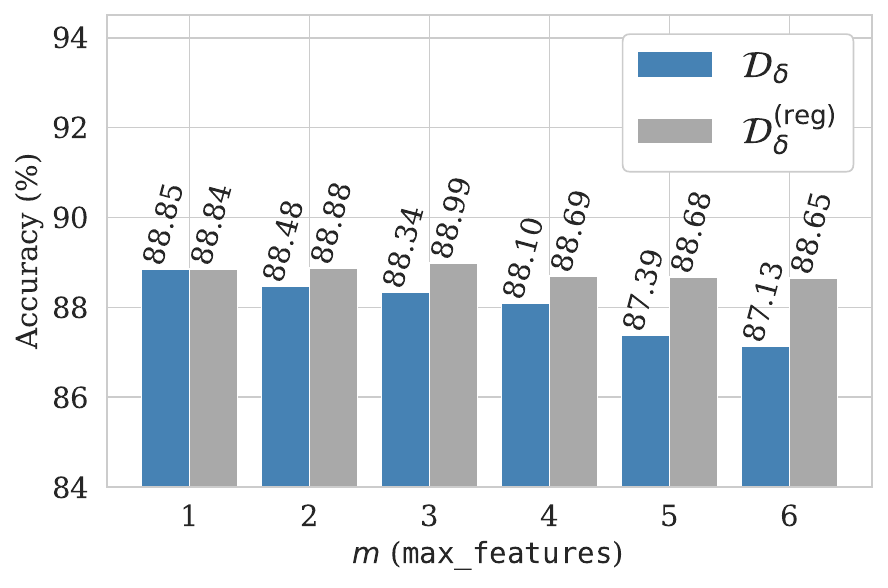}
    \caption{Percent accuracy across feature selection size ($m$) iterations for handwritten digit training datasets 
    $\mathcal{D}_{\delta}$ and $\mathcal{D}_{\delta}^{(\text{reg})}$. The impact of zero-variance features is 
    again evident, with $\mathcal{D}_{\delta}^{(\text{reg})}$ showing declining accuracy as $m$ increases. Zero-variance 
    feature regularization achieves the highest accuracy, unattainable via standard tuning alone.}
    \label{fig:combined_max_features2}
  \end{subfigure}
  \hspace*{\fill}
  \caption{Accuracy trends across feature selection sizes ($m$) for artist and digit datasets, highlighting the implicit 
  regularization effects of constant-value features.}
  \label{fig:combined_max_features}
\end{figure}

\topicstart{Scope of model improvements:} We analyze improvements at $m=6$, the setting with the largest discrepancy, for both ticket pricing and digit classification. Figs.~\ref{fig:regularization_improvement_tickets} and~\ref{fig:regularization_improvement_digits} show statistically significant gains from zero-variance regularization. For ticket pricing, $\mathcal{D}_{\alpha \beta}^{\text{(reg)}}$ adds zero-variance features while $\mathcal{D}_{\alpha \beta}$ includes $\alpha$ and $\beta$ only. A paired comparison gives $n_{\text{better}}=1084$, $n_{\text{worse}}=675$, and $N'=1759$. Under $H_0$ with $p=0.5$, a binomial sign test with the normal approximation and continuity correction yields $Z\approx 9.72$ and $p<10^{-21}$. Figure~\ref{fig:regularization_improvement_tickets} confirms this improvement. For digit classification, $\mathcal{D}_{\delta}^{(\text{reg})}$ includes zero-variance features while $\mathcal{D}_{\delta}$ excludes them. The paired counts are $n_{\text{better}}=52$, $n_{\text{worse}}=14$, and $N'=66$. The same test gives $Z\approx 4.56$ and $p<10^{-5}$, as highlighted in Fig.~\ref{fig:regularization_improvement_digits}. Table~\ref{tab:scope_improvements} summarizes the statistical significance for both datasets.

\begin{table}[t]
\centering
\caption{Summary of statistical results for improvements with $m=6$.}
\label{tab:scope_improvements}
\begin{tabular}{lcc}
\toprule
\textbf{Statistic} & \textbf{Tickets} & \textbf{Digits} \\
\midrule
Effective Sample Size ($N'$) & 1759 & 66 \\
$n_{\text{better}}$ & 1084 & 52 \\
$n_{\text{worse}}$ & 675 & 14 \\
Mean ($\mu = N'/2$) & 879.5 & 33 \\
Std. Dev. ($\sigma$) & 20.98 & 4.06 \\
Z-score & 9.72 & 4.56 \\
$p$-value & $<10^{-21}$ & $<10^{-5}$ \\
\bottomrule
\end{tabular}
\end{table}

\begin{figure}[t]
  \centering
  \hspace*{\fill}
  \begin{subfigure}[c]{0.48\linewidth}
    \includegraphics[width=\linewidth]{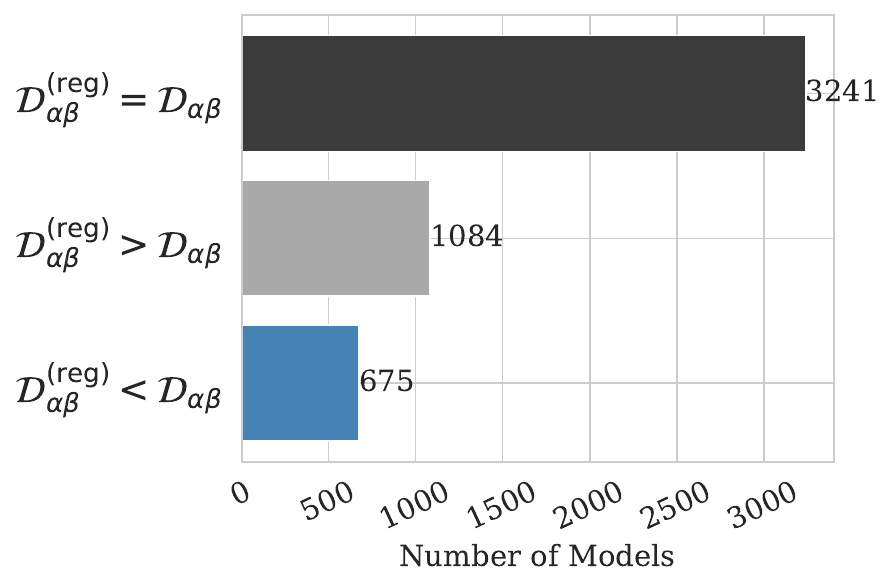}
    \caption{Performance comparison for $m = 6$ across $N_{\text{models}} = 5{,}000$ artist classification
    models. Bars indicate how often $\mathcal{D}_{\alpha \beta}^{\text{(reg)}}$ performed the same, better, or
    worse than $\mathcal{D}_{\alpha \beta}$ (see Fig.~\ref{fig:combined_max_features1}).}
    \label{fig:regularization_improvement_tickets}
  \end{subfigure}
  \hspace*{\fill}
  \begin{subfigure}[c]{0.48\linewidth}
    \includegraphics[width=\linewidth]{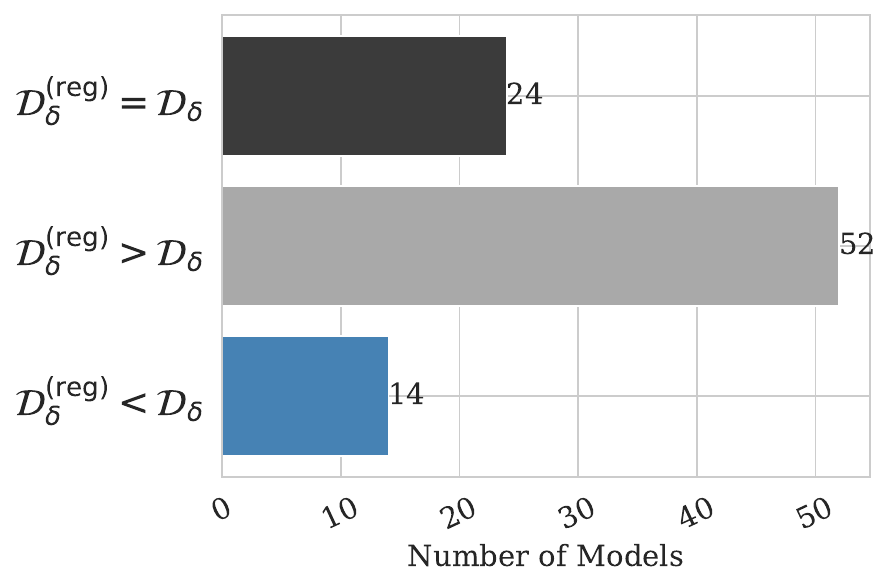}
    \caption{Performance comparison for $m = 6$ across $N_{\text{models}} = 90$ digit classification
    models. Bars indicate how often $\mathcal{D}_{\delta}^{(\text{reg})}$ performed the same, better, or
    worse than $\mathcal{D}_{\delta}$ (see Fig.~\ref{fig:combined_max_features2}).}
    \label{fig:regularization_improvement_digits}
  \end{subfigure}
  \hspace*{\fill}
  \caption{Effect of constant-value feature regularization at $m = 6$, showing the distribution of model
  performance changes for artist and digit classification tasks.}
  \label{fig:regularization_improvement_bars}
\end{figure}

\topicstart{Feature re-ranking and usage in the models:} In both
$\mathcal{D}_{\alpha\beta}$ vs.\ $\mathcal{D}_{\alpha \beta}^{\text{(reg)}}$ and
$\mathcal{D}_{\delta}$ vs.\ $\mathcal{D}_{\delta}^{(\text{reg})}$, introducing
zero-variance features modifies the selection probabilities in the approximate formula
\begin{equation}
P(X_j)
\;\approx\;
\frac{m}{n_{\mathrm{eff}}}
\,\cdot\,
\frac{
  r(X_j)
}{
  r(X_j)
  + (m-1)\,\overline{r},
}
\end{equation}
where $n_{\mathrm{eff}}=n + n_{\mathrm{ZV}}$ when constant features are present and equals
$n$ otherwise. By injecting low-scoring features into the ensemble, these datasets effectively
dilute the dominance of highly ranked predictors $X_j$. Consequently, the final usage
distribution, aggregated across all base learners, becomes more balanced, giving subtle but
informative features more opportunities at split nodes. This re-ranking serves as a form of
implicit regularization, stabilizing the Random Forest. Figs.~\ref{fig:regularization_feature_usage_tickets}
and~\ref{fig:regularization_feature_usage_digits} show the redistribution of feature
usage, highlighting the increased prominence of moderately-ranked predictors. Notably, the shifts
shown in these figures underscore how implicit regularization effectively promotes model robustness
through enhanced feature breadth.

\begin{figure}[t]
  \centering
  \hspace*{\fill}
  \begin{subfigure}[c]{0.48\linewidth}
    \includegraphics[width=\linewidth]{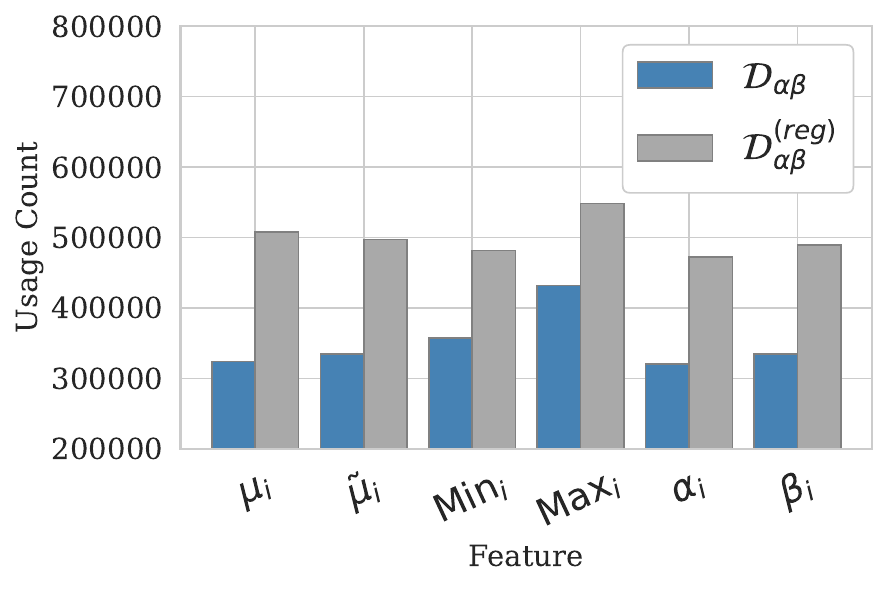}
    \caption{Average feature usage counts across $N_{\text{models}} = 5{,}000$ for
    $\mathcal{D}_{\alpha \beta}$ versus $\mathcal{D}_{\alpha \beta}^{\text{(reg)}}$.}
    \label{fig:regularization_feature_usage_tickets}
  \end{subfigure}
  \hspace*{\fill}
  \begin{subfigure}[c]{0.48\linewidth}
    \includegraphics[width=\linewidth]{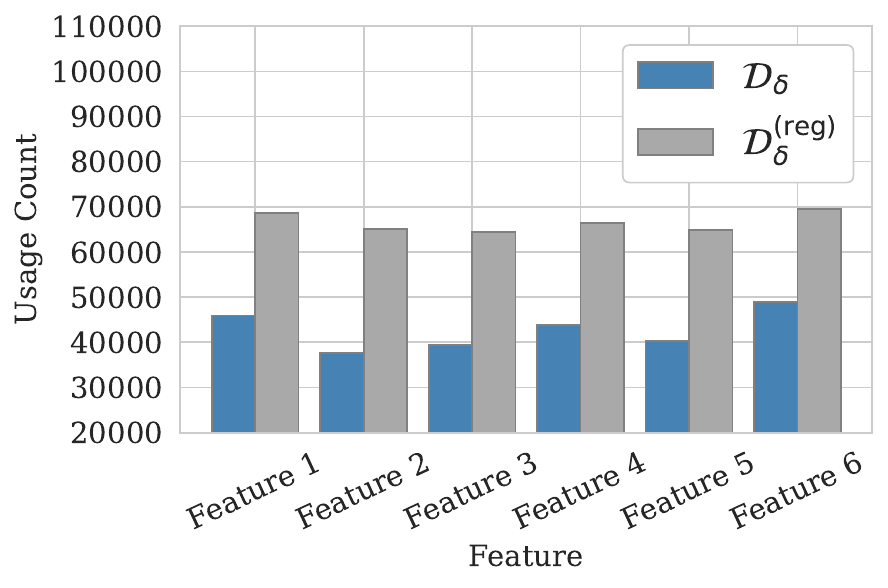}
    \caption{Average feature usage counts across $N_{\text{models}} = 90$ for
    $\mathcal{D}_{\delta}$ versus $\mathcal{D}_{\delta}^{(\text{reg})}$.}
    \label{fig:regularization_feature_usage_digits}
  \end{subfigure}
  \hspace*{\fill}
  \caption{Comparison of average feature usage between unregularized and regularized models, across both artist
  and digit datasets. Regularization via zero-variance features leads to more varied and balanced feature
  selection.}
  \label{fig:regularization_feature_usage}
\end{figure}

\topicstart{Increased tree depth with implicit regularization:} We further examine zero-variance features by computing the average tree depth across all models. Models with zero-variance features ($\mathcal{D}_{\delta}^{(\text{reg})}$ and $\mathcal{D}_{\alpha \beta}^{\text{(reg)}}$) grow deeper trees than non-regularized counterparts, consistent with corollary~\ref{corollary:increased_tree_depth} and supporting the view that implicit regularization encourages the use of more nuanced feature representations. The results validate our expected-depth analysis and extend \citep{geurts2006extremely}, where randomized split selection increases depth through reduced impurity gains. Figs.~\ref{fig:regularization_tree_depth_tickets} and~\ref{fig:regularization_tree_depth_digits} show the depth increases. For ticket pricing, the median depth rose from 3.0 to 4.0 and the average from 3.18 to 4.16. For handwritten digits, the median rose from 8.0 to 10.0 and the average from 8.35 to 10.42. These consistent and significant increases highlight a stabilizing effect of implicit regularization across distinct datasets.

\begin{figure}[t]
  \centering
  \hspace*{\fill}
  \begin{subfigure}[c]{0.48\linewidth}
    \includegraphics[width=\linewidth]{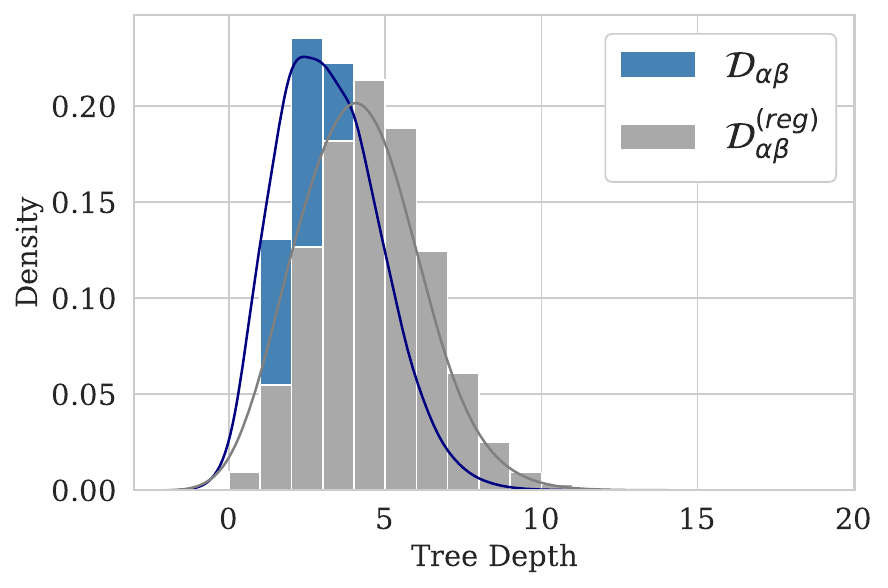}
    \caption{Average tree depth for $N_{\text{models}} = 5{,}000$ models:
    $\mathcal{D}_{\alpha \beta}$ vs.\ $\mathcal{D}_{\alpha \beta}^{\text{(reg)}}$. Median depth increased from
    3.0 to 4.0, and average depth from 3.18 to 4.16.}
    \label{fig:regularization_tree_depth_tickets}
  \end{subfigure}
  \hspace*{\fill}
  \begin{subfigure}[c]{0.48\linewidth}
    \includegraphics[width=\linewidth]{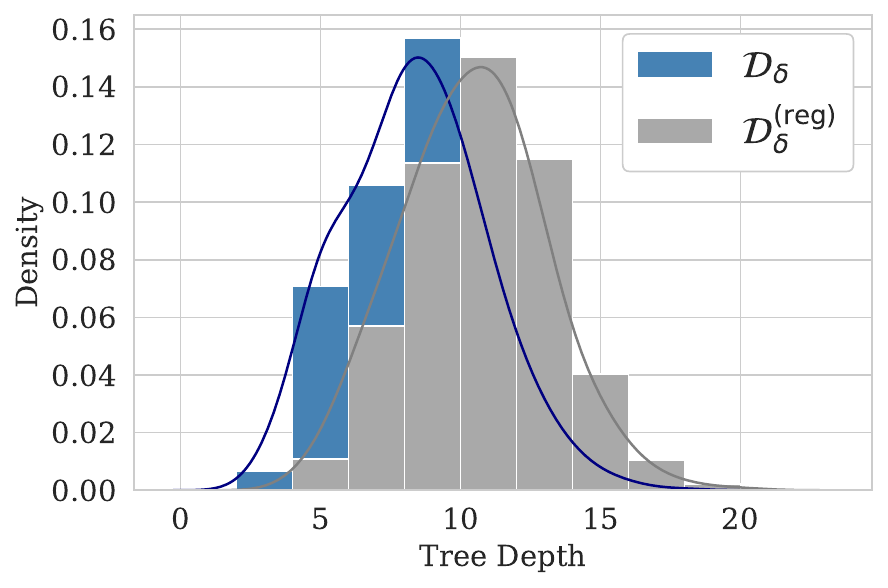}
    \caption{Average tree depth for $N_{\text{models}} = 90$ models:
    $\mathcal{D}_{\delta}$ vs.\ $\mathcal{D}_{\delta}^{(\text{reg})}$. Median depth increased from 8.0 to 10.0,
    and average depth from 8.35 to 10.42.}
    \label{fig:regularization_tree_depth_digits}
  \end{subfigure}
  \hspace*{\fill}
  \caption{Effect of zero-variance feature regularization on tree depth. Both artist and digit models trained with
  regularized datasets grow deeper trees on average, suggesting increased robustness and feature utilization.}
  \label{fig:regularization_tree_depth}
\end{figure}

\topicstart{Tree variety as measured by feature count distance:} We quantify the ensemble
``variety'' by examining the pairwise Euclidean distance between trees’ feature usage vectors,
$\mathbf{v}_i\in \mathbb{R}^d$. Defining the distance between trees $i$ and $j$ as
$\|\mathbf{v}_i-\mathbf{v}_j\|_2$, we compute the sum over all pairs:
\begin{equation}
V(m)=\sum_{1\le i<j\le n}\|\mathbf{v}_i-\mathbf{v}_j\|_2.
\end{equation}
This measure is computed per model for both regularized and non-regularized datasets (Figs.~\ref{fig:regularization_tree_distance_tickets} and~\ref{fig:regularization_tree_distance_digits}). A higher average $V(m)$ indicates more varied feature usage among trees, consistent with the effect of zero-variance features. The figures show that zero-variance features increase variety, reduce correlation among ensemble members, and stabilize models across scenarios. For ticket pricing, the median variety increased from 2.00 to 2.83 and the average from 2.26 to 2.99. For handwritten digits, the median increased from 4.90 to 7.35 and the average from 5.27 to 7.67.

\citep{geurts2006extremely} show that the fully randomized split selection in  
Extra-Trees reduces correlation among trees, which we quantify via the average cosine  
similarity of their normalized feature usage vectors. In particular, for any two trees with  
vectors $\mathbf{v}_i$ and $\mathbf{v}_j$ (with $\|\mathbf{v}_i\|_2 = \|\mathbf{v}_j\|_2 = 1$),  
we have  
\begin{equation}
\mathbf{v}_i^\top \mathbf{v}_j = 1 - \frac{1}{2}\|\mathbf{v}_i-\mathbf{v}_j\|_2^2.
\end{equation}  
Defining the average correlation $p$ as  
\begin{equation}
p = \frac{2}{n(n-1)}\sum_{1\le i<j\le n} \mathbf{v}_i^\top \mathbf{v}_j,
\end{equation} 
we obtain  
\begin{equation}
p = 1 - \frac{1}{n(n-1)}\sum_{1\le i<j\le n}\|\mathbf{v}_i-\mathbf{v}_j\|_2^2.
\end{equation}
Thus, as our variety measure $V(m)$  
increases, the average correlation $p$ decreases, demonstrating that greater tree variety leads  
to reduced inter-tree correlation. This mathematical relationship aligns with our analysis,  
showing that implicit regularization via zero-variance features promotes a more  
decorrelated ensemble.

\begin{figure}[t]
  \centering
  \hspace*{\fill}
  \begin{subfigure}[c]{0.48\linewidth}
    \includegraphics[width=\linewidth]{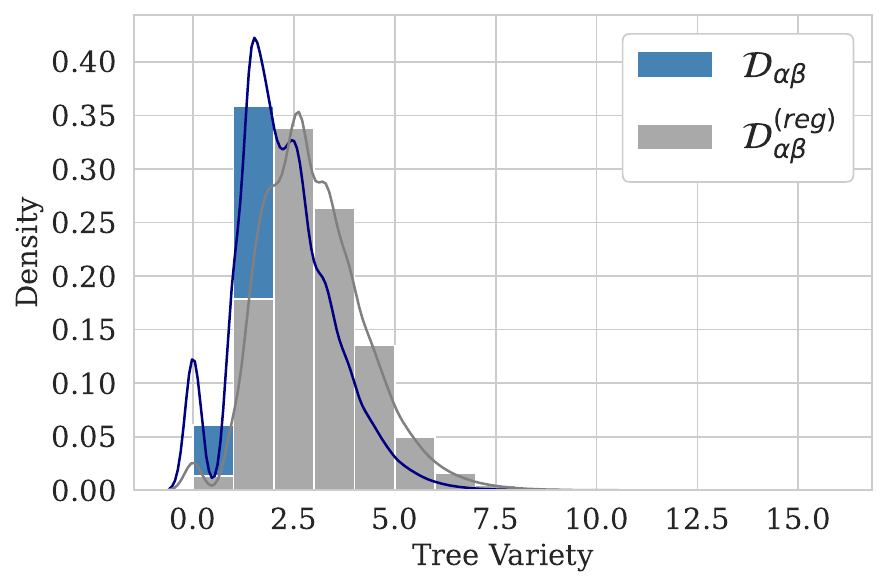}
    \caption{Average tree variety for $N_{\text{models}} = 5{,}000$ models:
    $\mathcal{D}_{\alpha \beta}$ vs.\ $\mathcal{D}_{\alpha \beta}^{\text{(reg)}}$. Median variety increased
    from 2.00 to 2.83, and average variety from 2.26 to 2.99.}
    \label{fig:regularization_tree_distance_tickets}
  \end{subfigure}
  \hspace*{\fill}
  \begin{subfigure}[c]{0.48\linewidth}
    \includegraphics[width=\linewidth]{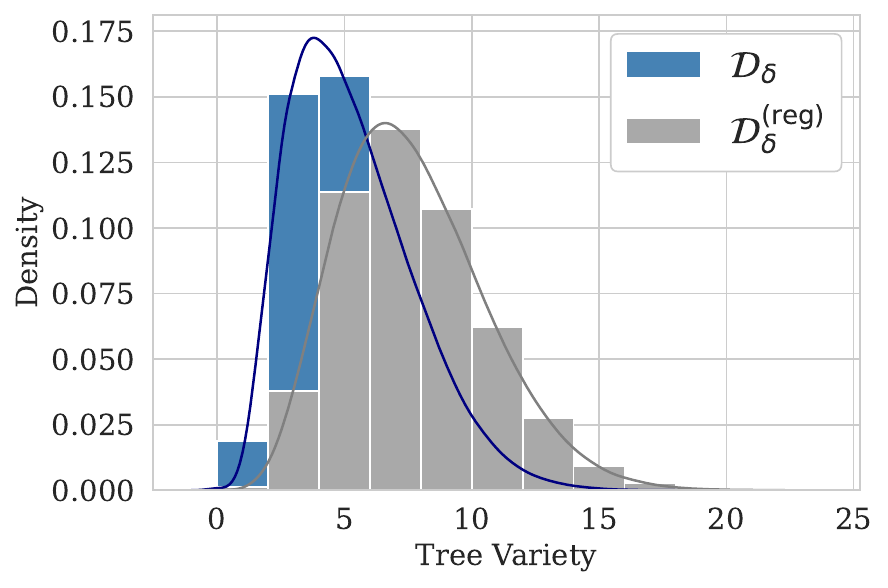}
    \caption{Average tree variety for $N_{\text{models}} = 90$ models:
    $\mathcal{D}_{\delta}$ vs.\ $\mathcal{D}_{\delta}^{(\text{reg})}$. Median variety increased from 4.90 to
    7.35, and average variety from 5.27 to 7.67.}
    \label{fig:regularization_tree_distance_digits}
  \end{subfigure}
  \hspace*{\fill}
  \caption{Effect of zero-variance feature regularization on Random Forest tree variety. Regularization increases
  both the median and average distances of tree structures in the ensemble, improving generalization capacity.}
  \label{fig:regularization_tree_distance}
\end{figure}

\topicstart{Error decomposition and implicit regularization effects:} Recalling the bias-variance decomposition, zero-variance features dilute top-ranked predictors and increase tree variety, which modifies ensemble variance. If $\rho$ is the correlation among tree predictions, then
\begin{equation}
\mathrm{Var}(f(x; n_{\mathrm{ZV}}))
\;=\;
\frac{
  \rho(n_{\mathrm{eff}})\,\mathrm{Var}(h(x; n_{\mathrm{ZV}}))
}{
  B
}.
\end{equation}
This aligns with \citep{geurts2006extremely}, who show that added randomness lowers ensemble correlation. Zero-variance features decrease $\rho(n_{\mathrm{eff}})$ (Corollary~\ref{corollary:reduced_ensemble_correlation}), thus reducing variance. Bias may rise slightly as moderate features are used more, yet our experiments show a net generalization gain.

The regularization view is reinforced by \citep{wyner2017explain}, who attribute AdaBoost's success to averaging interpolating classifiers that yield “spiked-smooth” boundaries, indicating implicit regularization through averaging rather than explicit penalties. This perspective is consistent with the correlation-controlled variance expression above and with classic variance-shrinkage from averaging; see \citep{wyner2017explain}.

\topicstart{Trust in recovered distributions:}  
In the SeatGeek ticket-pricing use case, the experimental results show that implicit regularization not only improves classification accuracy but also expands the effective search space of the Random Forest ensemble. This dual effect enhances trust in the recovered distributions, since accuracy gains more directly validate fidelity to the underlying pricing dynamics. At the same time, varying the number of zero-variance features provides an additional knob for tuning that trust, giving fine-grained control over the balance between distributional fidelity and ensemble robustness.  

\topicstart{Case study, Dropkick Murphys vs. The Avett Brothers:}
A Dropkick Murphys concert on 10/28/2023 is initially mislabeled by the baseline Random Forest as an Avett Brothers show. In the unregularized model the location statistics $\mu_i$, $\tilde{\mu}_i$, $\text{Min}_i$, and $\text{Max}_i$ hold the highest empirical ranks and dominate split lotteries across trees (Fig.~\ref{fig:dm_ab_usage}, blue bars). The artists' feature distributions overlap substantially (Fig.~\ref{fig:kde_alpha_beta_params_Dropkick_Murphys}). Driven by location parameters, the ensemble overlooks the visual match between the event price density (black curve) and the Dropkick template (blue dashed) in Fig.~\ref{fig:dm_ab_density}.

Section~\ref{sec:regularization} adds $n_{\text{ZV}}$ zero-variance columns, diluting the pool from $n$ to $n+n_{\text{ZV}}$ and lowering expected ranks of genuine predictors. Theorem~\ref{thm:zero_variance_dilution} (Zero-Variance Dilution Effect) shows this increases sampling of mid-ranked features, especially $\beta_i$. Empirically $\beta_i$ gains split frequency while location statistics decline (Fig.~\ref{fig:dm_ab_usage}, grey). The amplified shape signal flips a majority of trees and the ensemble classifies the concert correctly.

Previously correct predictions remain. Regularization preserves the decision boundary and informative location cues, gives shape parameters more opportunities, and trims tree correlation for a modest bias-variance gain. The effect requires both scaled beta parameter estimation and zero-variance regularization. Section~\ref{sec:regularization} anticipates this and Figs.~\ref{fig:combined_max_features1} and~\ref{fig:combined_max_features2} show how regularization unlocks a near-continuous space of feature weightings beyond the baseline and standard hyperparameter search.

\begin{figure}[t]
    \centering
    \includegraphics[width=\textwidth]{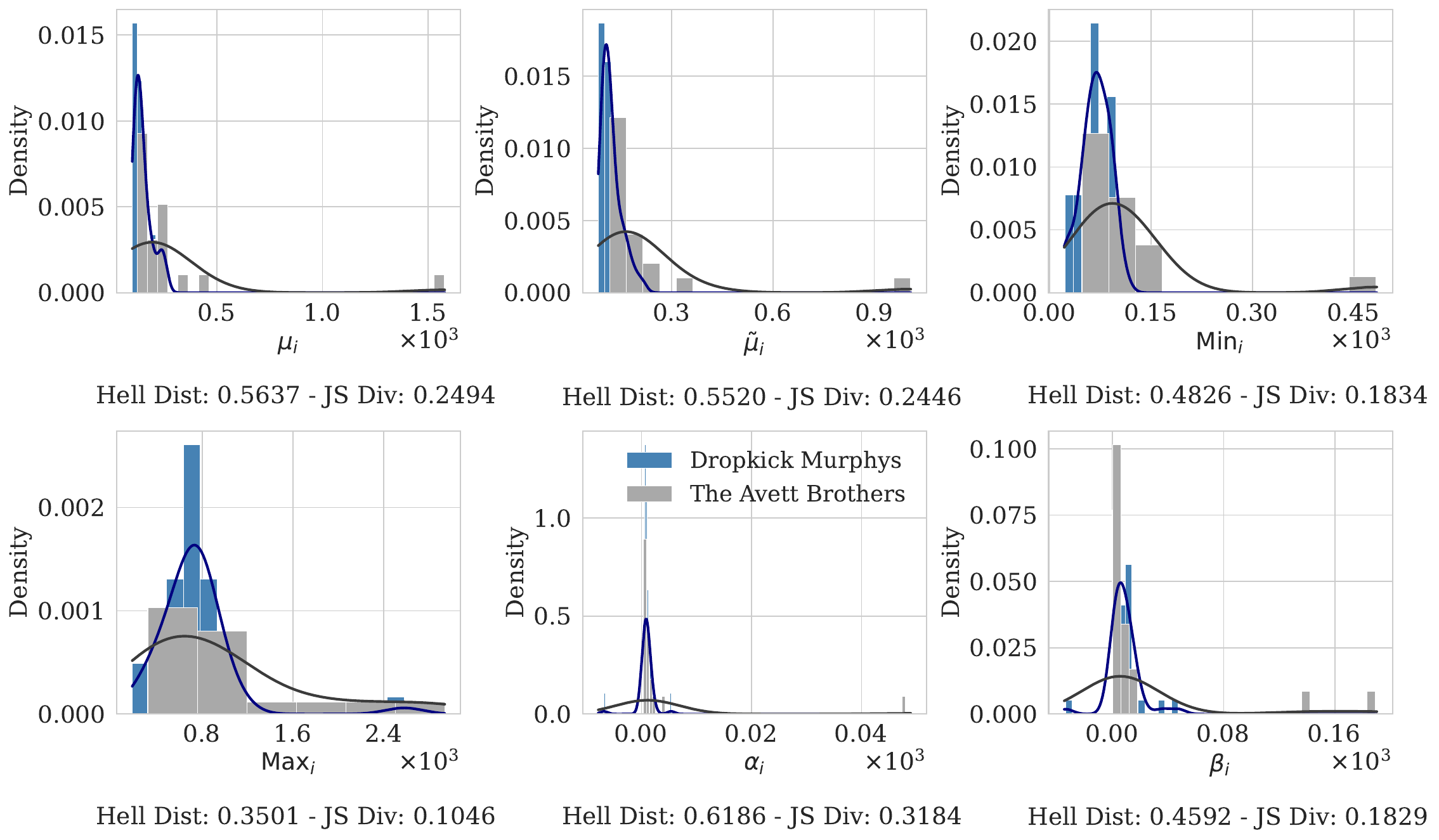}
    \caption{The plots show the distributions of each feature across all events for artists
    Dropkick Murphys and The Avett Brothers. In each case, there is considerable overlap, making classification 
    as determined by location statistics alone difficult. With a satisfactory re-ranking of feature importances by
    the zero-variance implicit regularization mechanism, the Random Forest model can classify more effectively based on 
    the estimated distributional shape as shown explicitly in Fig.~\ref{fig:dm_ab_comparison}.}
    \label{fig:kde_alpha_beta_params_Dropkick_Murphys}
    \vskip -0.2in
\end{figure}

\begin{figure}[t]
  \centering
  \hspace*{\fill}
  \begin{subfigure}[c]{0.48\linewidth}
    \includegraphics[width=\linewidth]{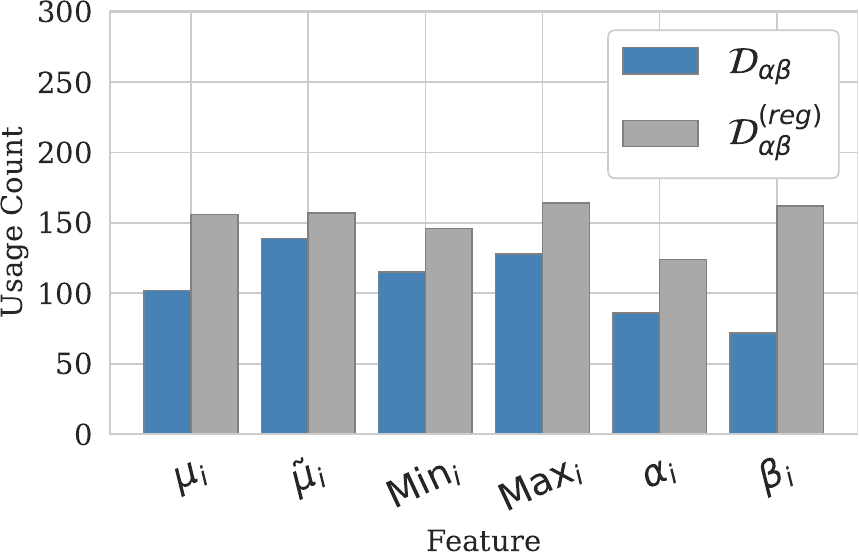}
    \caption{Feature-usage counts across the Random Forest before (blue) and after (grey) the zero-variance
    regularization. Location statistics ($\text{Min}_i$, $\text{Max}_i$, $\mu_i$, $\tilde{\mu}_i$) remain
    frequent but lose relative dominance, while the shape parameter $\beta_i$ increases its appearances in
    split decisions significantly, an explicit illustration of the Zero-Variance Dilution Effect that corrects
    the misclassification and leaves earlier correct calls intact.}
    \label{fig:dm_ab_usage}
  \end{subfigure}
  \hspace*{\fill}
  \begin{subfigure}[c]{0.48\linewidth}
    \includegraphics[width=\linewidth]{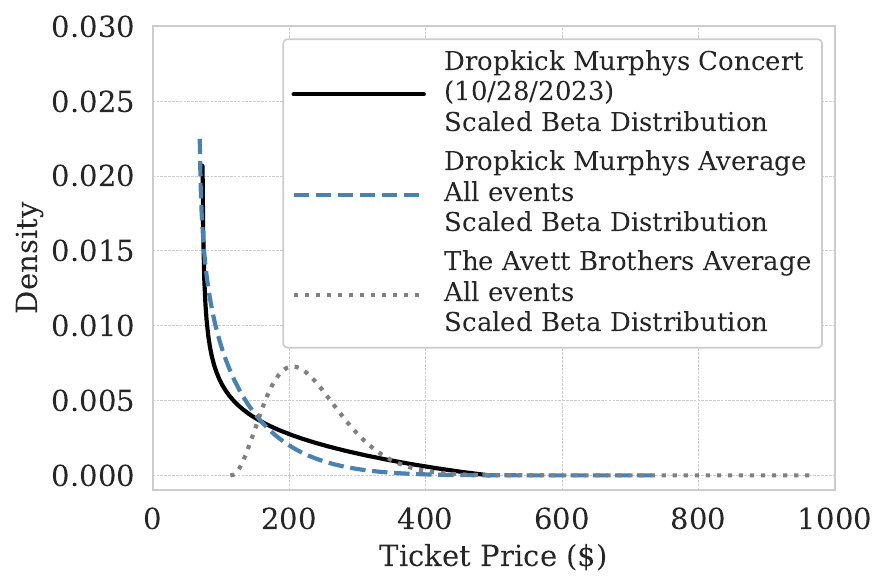}
    \caption{Scaled beta ticket-price densities: the fitted distribution for the 10/28/2023 concert (solid) versus
    the mean Dropkick Murphys profile (dashed) and The Avett Brothers profile (dotted). The signature
    matches the Dropkick template, but the signal is muted when the model relies chiefly on statistics such as
    $\text{Min}_i$, $\text{Max}_i$, $\mu_i$, $\tilde{\mu}_i$.}
    \label{fig:dm_ab_density}
  \end{subfigure}
  \hspace*{\fill}
  \caption{Illustration of how summary statistics can mislead model classification when the true distribution
  shape is more informative. Dropkick Murphys and The Avett Brothers show overlapping location descriptors despite 
  nuanced and differing density shapes. Regularization shifts the focus of the Random Forest to shape.}
  \label{fig:dm_ab_comparison}
\end{figure}

\section{Conclusions}
\label{sec:conclusions}

We conclude by restating and reinforcing the three established contributions.

\begin{enumerate}
    \item \textbf{Closed-form distribution estimation from limited statistics.} We developed a composite quantile-and-moment matching estimator that reconstructs scaled beta distributions from the minimum, maximum, mean, and median $(\text{Min}_i, \text{Max}_i, \mu_i, \tilde{\mu}_i)$, yielding parameters $(\alpha_i, \beta_i)$ that capture shape beyond location summaries. The approach builds on classical beta results \citep{krishnamoorthy2016handbook} and connects to recent work on quantile and moment based estimation under information constraints \citep{zhang2023quantile, dempster2024quant, wei2024latent}. In our application setting, these statistics were retrieved through the SeatGeek API \citep{seatgeek2025}, making the estimator practical at scale. Relatedly, black-box parameter estimation methods~\citep{lenzi2025blackbox} and reliability inference from record values~\citep{pareto2025reliability} highlight the broader importance of recovering distributions from compressed or incomplete data. Unlike iterative solvers, the proposed estimator achieves distributional recovery in a single analytical step, emphasizing both scalability and computational economy.
    \item \textbf{Accuracy–fidelity theory.} We established a link from predictive accuracy to distributional fidelity, using Total Variation Distance and Jensen-Shannon divergence \citep{lin1991divergence, devroye1996probabilistic, tsybakov2004optimal}. The analysis shows that improvements in artist classification accuracy correspond to increasingly precise estimates of the underlying scaled beta parameters, and that convergence of the information-theoretic discrepancy is quadratic in the accuracy margin. This provides stability guarantees for estimation in sparse and noisy environments, a regime that is typical for market snapshots.
    \item \textbf{Implicit regularization via zero-variance features.} We showed that augmenting Random Forests with zero-variance (constant-value) features can serve as an implicit regularizer that reduces the dominance of highly ranked variables, encourages variety, and deepens trees. The effect is consistent with the literature on bagging, random subspaces, and randomized trees \citep{breiman1996bagging, breiman2001random, ho1998rsm, geurts2006extremely}, and resonates with stability and ensemble correlation perspectives \citep{bousquet2002stability, wyner2017explain, wager2018causal}. It complements hyperparameter oriented strategies \citep{mentch2019quantifying, dunbar2025hyperparameter} by shifting split-selection probabilities through small structural changes to the feature space. Comparable to prior work on sparsity-inducing priors~\citep{bai2023sbl}, our contribution emphasizes how structural constraints can implicitly regularize complex ensembles, tuning trust in distributional adherence.
\end{enumerate}

\topicstart{Applied impact and the SeatGeek dataset:} The secondary ticket resale market provides a natural testbed for distributional reconstruction and feature-based classification. Platforms such as SeatGeek, StubHub, and Ticketmaster surface highly dynamic signals at event and artist resolution. Working with the curated SeatGeek dataset and daily snapshots, we transformed subsequences of pricing activity into distributional snapshots and then into learned features that expose artist-specific economic signatures. In the time-series classification literature, targeted feature representations have proven effective for high dimensional problems \citep{middlehurst2024bakeoff, lubba2019catch22, christ2018tsfresh}. Our results contribute a distribution based feature pathway that is tractable from minimal statistics and operational in data settings where full empirical distributions are not retained \citep{seatgeek2025}.

\topicstart{Theoretical guarantees and empirical validation:} The accuracy–fidelity bridge formalized here grounds distribution recovery in risk bounds and information measures \citep{lin1991divergence, devroye1996probabilistic, tsybakov2004optimal}. Within this framework, the modest but consistent classification gains from injecting $(\alpha_i, \beta_i)$ into the feature set have outsized meaning for distributional integrity due to the quadratic Jensen-Shannon rate. In practice, this allows classifier accuracy to serve as an operational stand-in for distributional ground truth when only summary statistics are available. The same perspective clarifies the role of constant-value features as a mechanism that rebalances split selection, lowers correlation across trees, and improves generalization for ensembles \citep{breiman1996bagging, breiman2001random, ho1998rsm, geurts2006extremely, bousquet2002stability, wyner2017explain, wager2018causal, mentch2019quantifying, dunbar2025hyperparameter}. Empirically, we validated the approach on a newly curated SeatGeek pricing dataset and on the UCI handwritten digits benchmark \citep{uci_digits}, confirming generality beyond ticket pricing and showing that the implicit regularization effect is not domain restricted.

\topicstart{Outlook:} The arc demonstrated here is as follows: sparse distributional snapshots of time series $\rightarrow$ closed-form scaled beta estimation $(\alpha,\beta)$ $\rightarrow$ Random Forest accuracy gains, with fidelity, and therefore trust, amplified by implicit regularization from zero-variance features. It is broadly applicable when dynamic systems are observed through compressed summaries. For live-market analytics, healthcare operations, demand forecasting, and energy systems, the same constraints on data access recur. The methods presented are simple to instrument, amenable to scale, and compatible with existing ensemble workflows. In settings where data arrives as aggregated snapshots rather than full samples, this narrative offers a principled route to reconstruct informative distributions, improve classification, and support decision-making with clear theoretical guarantees.

\begin{acknowledgment}

  \topicstart{Code and data availability:}
  Datasets, methodology code, and figure generation code are available 
  at the following GitHub repository for reproducibility:\\
  \href{https://github.com/jonland82/seatgeek-beta-modeling}{\texttt{github.com/jonland82/seatgeek-beta-modeling}}

  \topicstart{Methods:}
  The author used standard computational tools and programming libraries,
  including Python packages and a large language model (OpenAI),
  to assist with code snippets, algebraic manipulation, and editorial refinement.
  All content and interpretations were conceptualized, reviewed and finalized by the author.

  \topicstart{Declarations:}
  The author used publicly available event data accessed via the SeatGeek API (SeatGeek, Inc.) in
  accordance with SeatGeek’s API Terms of Use. SeatGeek is not affiliated with this research and does not 
  endorse it. All trademarks and content remain the property of their respective owners. Proper attribution 
  is provided at \href{https://seatgeek.com}{seatgeek.com} as required. Raw API data is not redistributed 
  per licensing requirements.
  \\
  \\
  The author reports no conflicts of interest. No funding was received for this research.

\end{acknowledgment}

\bibliographystyle{plainnat}   
\bibliography{references}

\end{document}